\newtheoremstyle{mytheoremstyle} 
    {\topsep}                    
    {\topsep}                    
    {\itshape}
    {}                           
    {\bfseries}                   
    {.}                          
    {.5em}                       
    {}  
\theoremstyle{mytheoremstyle}
\newtheorem{theorem}{Theorem}
\newtheorem{lemma}[theorem]{Lemma}
\newcommand{\ab}{\mathbf{a}}
\newcommand{\bx}{\bm{x}}
\newcommand{\bI}{\bm{I}}
\newcommand{\cN}{\mathcal{N}}
\newcommand{\EE}{\mathbb{E}}
\newcommand{\bmu}{\bm{\mu}}
\newcommand{\bSigma}{\bm{\Sigma}}
\newcommand{\tr}{\mathop{\mathrm{tr}}}
\newcommand{\diag}{{\rm diag}}
\newcommand{\rd}{{\,\mathrm{d}}}
\newcommand{\dpVAE}{\texttt{dp-VAE}\xspace}
\begin{document}

\title{\color{black} Distance-Preserving Representations \\ for Genomic Spatial Reconstruction}

\author{Wenbin~Zhou\orcidlink{0009-0003-2757-8304}
        and~Jin-Hong~Du\orcidlink{0000-0001-9683-4146}
\IEEEcompsocitemizethanks{
\IEEEcompsocthanksitem
Wenbin Zhou is with the Heinz College of Information Systems and Public Policy and Machine Learning Department, Carnegie Mellon University, Pittsburgh, PA 15213, USA.
\IEEEcompsocthanksitem Jin-Hong Du is with the Musketeers Foundation Institute of Data Science and the Department of Statistics and Actuarial Science, University of Hong Kong, Hong Kong SAR.}
}

\markboth{}
{Zhou \MakeLowercase{\textit{et al.}}: Distance-Preserving Genomics Representations}


\IEEEtitleabstractindextext{%
\begin{abstract}
    The spatial context of single-cell gene expression data is crucial for many downstream analyses, yet often remains inaccessible due to practical and technical limitations, restricting the utility of such datasets.
    In this paper, we propose a generic representation learning and transfer learning framework \dpVAE, capable of reconstructing the spatial coordinates associated with the provided gene expression data. 
    Central to our approach is a distance-preserving regularizer integrated into the loss function during training, ensuring the model effectively captures and utilizes spatial context signals from reference datasets.
    During the inference stage, the produced latent representation of the model can be used to reconstruct or impute the spatial context of the provided gene expression by solving a constrained optimization problem.
    We also explore the theoretical connections between distance-preserving loss, distortion, and the bi-Lipschitz condition within generative models.
    Finally, we demonstrate the effectiveness of \dpVAE in different tasks involving training robustness, out-of-sample evaluation, and transfer learning inference applications by testing it over 27 publicly available datasets. This underscores its applicability to a wide range of genomics studies that were previously hindered by the absence of spatial data.
\end{abstract}

\begin{IEEEkeywords}
    Representation learning, spatial transcriptomics, single-cell RNA sequences, variational inference, and dimensionality reduction.
\end{IEEEkeywords}}

\maketitle

\begin{figure*}[!t]
  \centering
  \includegraphics[width=0.8\linewidth]{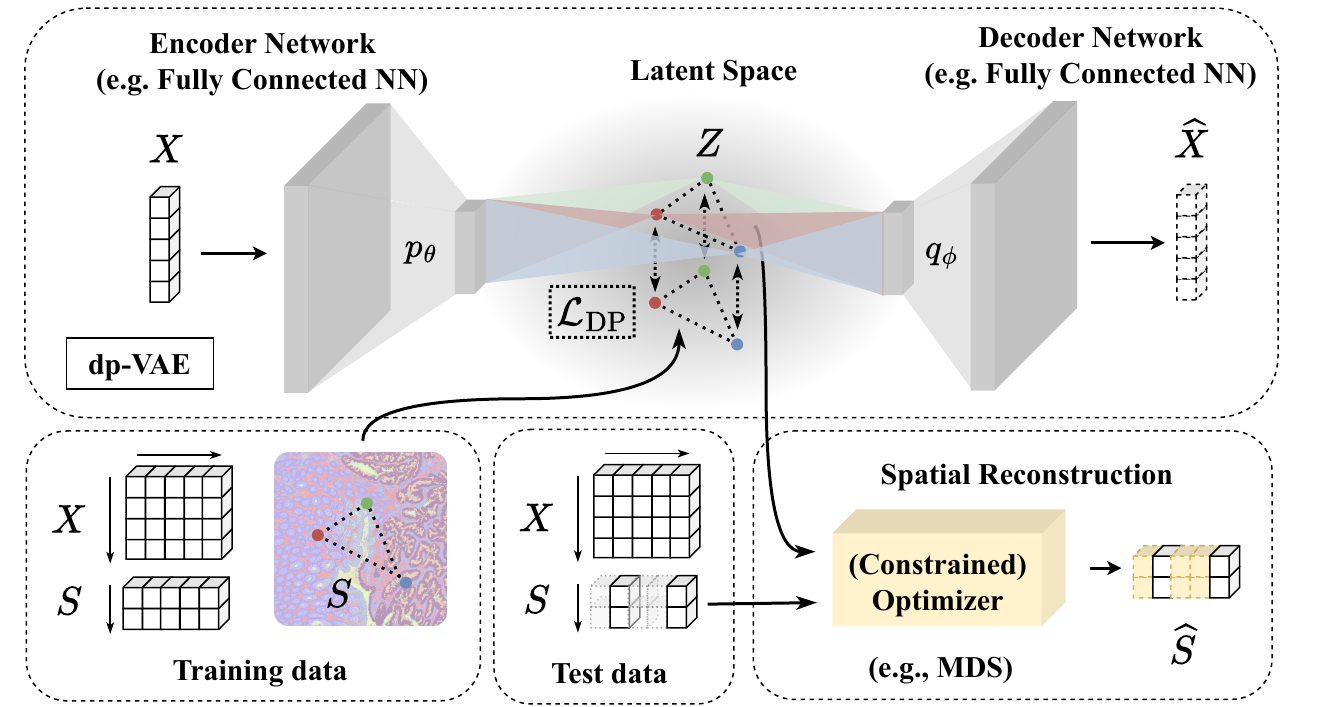}
  \caption{
  The model architecture of \dpVAE with vanilla VAE as its backbone.
  During the training stage, the distance-preserving loss ($\mathcal{L}_{\textrm{DP}}$) enforces distance preservation between the spatial domain ($S$) and the latent space ($Z$), enforcing the encoder network (\textit{e.g.}, parameterized as a fully connected neural network) to capture meaningful signals of the tissue spatial characteristics of the provided gene expression ($X$).
  During the inference/test stage, the extracted latent representations are fed to an optimizer, such as multidimensional scaling (MDS), if it is a spatial reconstruction task instead of a spatial context imputation task.
  }
  \label{fig:pipeline}
\end{figure*}

\section{Introduction}

\IEEEPARstart{U}{nderstanding} the tissue spatial context from which the gene expressions are extracted is crucial to many applications, deepening the understanding of cellular organizations and their interactions within tissues, as well as illuminating biological insights in neuroscience, developmental biology, precise diagnostics, personalized therapies, and a range of disease mechanisms such as cancer \citep{yu2022spatial}. 
As a result, measurement technologies such as spatial transcriptomics have emerged as a powerful tool in modern biology, bridging the gap between single-cell genomics and the larger physiological processes at play in living tissues.

However, various practical and technical constraints limit the range of spatial context and the granularity that these measurement tools can directly capture.
For example, spatial transcriptomics (ST) technologies such as Visium by 10x Genomics and NanoString GeoMx \citep{dries2021advances} typically do not have single-cell resolution \cite{chen2023spatial}, making the extracted spatial coordinates within the tissue space too coarse for many precision-focused downstream analyses.
Enhancing resolution to this degree requires specialized equipment, intricate protocols, and substantial computational infrastructure, thereby limiting its availability to highly specialized institutions and facilities.
On the other hand, the usual single-cell RNA sequencing (scRNA-seq) is capable of extracting gene expression at a localized single-cell level but lacks spatial contexts.
Given observations of the complementary nature of the pros and cons of these two existing mainstream gene expression extraction technologies, the following question can be asked:
\begin{quote}
    \it
    Does there exist an alternative solution to reconstruct or impute spatial context information from gene expression data?
\end{quote}
An approach towards solving this question can potentially avoid the need for the deployment of spatial transcriptomics, making the tasks we intend to carry out with spatial context more accessible.

As an initial step toward tackling this question, we propose a generic representation learning and transfer learning framework enabling the inference of spatial context from purely gene expression data (\textit{e.g.}, collected from scRNA-seq) by drawing reference from datasets where gene expression is paired with spatial information (\textit{e.g.}, collected from spatial transcriptomics). This procedure requires little or even no reliance on ST technologies, depending on the type of reference data that the user intends to use, which can even come from existing publicly available databases.
Specifically, our representation learning framework is based on the variational autoencoder (VAE) models \cite{vae}, consisting of an encoder and a decoder network and trained on the reference datasets. Instead of using the standard {\color{black} evidence lower bound (ELBO)} loss, we propose to regularize it with a distance-preserving loss function, which utilizes the spatial context as auxiliary information to enforce the learned representation to be geometrically similar to the reference dataset.
This regularization helps the model capture useful marker genes and signals from gene expressions that are representative of their spatial contexts.
During inference, since spatial context is not used as the input to the model, the model can be applied to gene expression datasets without spatial context information, but may share similar expression and geometric patterns with the reference dataset, thereby effectively extrapolating to the task of spatial reconstruction and imputation of the missing spatial context in these inference datasets.
Next, the spatial reconstruction task can be formulated as an optimization that reversely solves for the optimal coordinate sets aligning with the distance geometry of extracted latent representations from the input gene expression data. This optimization can be bypassed when we set the latent representation space to match the dimensionality of the tissue's spatial domain. An illustration of the pipeline is provided in \Cref{fig:pipeline}.

{\noindent \bf Contributions.}
The contributions of this paper are summarized as follows:
\begin{enumerate}
    \item We propose \dpVAE, a novel variational-inference-based representation learning framework, capable of extracting useful signals of the spatial context from gene expression data. Its uniqueness lies in leveraging spatial context information exclusively during the training phase, thereby enabling spatial reconstruction and imputation of gene expression datasets, which is a task underexplored by existing studies.
    \item We conduct an extensive empirical evaluation of the proposed method's robustness over 27 publically available datasets on our method's training stability, out-of-sample performance and evaluation, and transfer learning (out-of-distribution) inference capacity, providing a comprehensive analysis of the effectiveness of our method. 
    \item Additionally, we establish the theoretical connection between the distance-preserving loss and the distortion or bi-Lipschitz condition, contributing to the scarce literature on characterizing the mathematical notion of smoothness of distance-preserving mappings in the context of generative models.
\end{enumerate}
Our approach enables researchers to bridge the gap between standard gene expression profiles and spatial analyses, paving the way for more scalable and cost-effective research that integrates spatial insights into a variety of downstream applications.

\section{Related Works}

Spatial transcriptomics analysis has consistently garnered interest from the statistical machine learning community. It includes different tasks such as cellular deconvolution, clustering, visualization, cell classification, batch correction, gene imputation, and differential expression \cite{ma2022spatially, long2023spatially, lopez2018deep, du2020model}.
Classical statistical models such as marked point process \cite{edsgard2018identification}, Gaussian process \cite{svensson2018spatialde}, generalized linear spatial models \cite{sun2020statistical, zhu2021spark}, and autoregression \cite{ma2022spatially} have been deployed for these modeling tasks, where spatial context information is typically integrated into these methods in the form of covariance matrices.
In the meantime, machine learning models empowered with deep neural networks can instead directly leverage the spatial context information.
For example, convolution neural network-based methods such as CoSTA \cite{xu2021costa}, and/or graph-based methods using graph neural networks such as GLISS \cite{zhu2020integrative}, SpaGCN \cite{hu2021spagcn}, GraphST \cite{long2023spatially}. See \cite{zeng2022statistical} for a survey of these methods.

Our work aligns more closely with literature that explores how various generative models (\textit{e.g.} autoencoders) can be applied to learning efficient representation from ST and scRNA-seq, such as STAGATE \cite{dong2022deciphering}, scVSC \cite{wang2024scvsc}, scVI \cite{lopez2018deep}, and gimVI \cite{lopez2019joint}.
Some studies have also proposed spatial reconstruction methodologies of tissue spatial structure, such as Tangram \cite{biancalani2021deep}, and NovoSpaRc \cite{moriel2021novosparc}. 
However, to the best of our knowledge, our work is the first to propose to bridge representation learning with tissue spatial reconstruction, drawing reference from out-of-sample and out-of-distribution datasets.
Additionally, the previously mentioned statistical and machine learning models dedicated to spatial transcriptomics are also inapplicable to such tasks, as the spatial contexts are required as a conditional input to the models during the inference stage.  

The distance-preserving loss used in our study has (concurrently) emerged in parallel domains within the broader machine-learning literature. For instance, distance-preserving mappings have been introduced for GANs for unsupervised domain mapping \cite{benaim2017one}, image-to-audio transformation \cite{port2020earballs, kim2020face}. It has also appeared as a regularizer for VAEs \cite{chen2022local} and adversarial learning \citep{hadsell2006dimensionality, mao2019metric}. A stronger notation compared to distance-preserving, named isometry \cite{yonghyeon2021regularized, lee2021neighborhood, beshkov2022isometric} has also been extensively studied in machine learning models such as VAEs.
Building on top of these works, our derivation of the distance-preserving loss's connection to distortion or the bi-Lipschitz condition contributes to formalizing this notion.

\section{Problem setup}
\label{sec:problem-def}

At the high level, the goal of representation learning is to learn a mapping that extracts latent embeddings $z \in \mathbb{R}^d$ from given gene expression $x \in \mathbb{R}^D$. In our setting, we additionally assume that we have access to some spatial context information $s \in \mathbb{R}^{2}$ associated with each gene expression, indicating where the cell is located in the tissue slice. It is only available in the training stage, but not during inference. 

We expect that the learned representation learning model can also undergo some procedure to reconstruct the spatial domain given only gene expression during the inference stage, which we refer to as the \textit{spatial reconstruction} task.
Formally, given the gene expression a set of gene expressions denoted as $x_i \in \mathbb{R}^D$, $i = 1, \ldots, N$, the goal of spatial reconstruction is to obtain a mapping to infer the spatial coordinates $s_i \in \mathbb{R}^2$ for each cellular location $i$.
In some cases, the spatial information of some cell has already been observed, indexed by $i \in \mathcal{I} \subseteq \left\{ 1, \ldots, N \right\}$, then the spatial reconstruction task turns into the \textit{spatial imputation} task, which aims to infer the rest of the spatial coordinates $s_i$, where $i \in \left\{ 1, \ldots, N \right\} / \mathcal{I}$.
{\color{black}
The model should be able to handle both spatial reconstruction and imputation tasks.
Given their similarity, we use the two terms interchangeably, referring to them as spatial reconstruction in most cases throughout this paper.
}

We remark that, in practice, both spatial reconstruction and imputation are useful for many downstream analyses, such as spatial domain identification, region-region interaction analysis, and spatial trajectory analysis.
The inference task is similar in spirit to the setting of transfer learning \cite{pan2009survey}. 

\section{Method}

To tackle the aforementioned challenges, we propose a gene expression representation learning framework. The model, named distance-preserving variation autoencoder (\dpVAE), incorporates the spatial information into the learning objective as a penalty for the error of the pairwise similarity between the model's learned latent variable and the spatial information of the training data. In this case, the spatial information can also be viewed as an auxiliary variable during the training stage rather than a direct input of the model, therefore it is not required during the inference stage\footnote{Whereas conditional models such as CVAE \cite{sohn2015learning, wu2023counterfactual} do require the spatial context as inputs, therefore cannot be applied in this setting.}. During inference, spatial reconstruction, and spatial imputation, this penalty can be used reversely to decipher the spatial coordinates from the extracted latent representations of the data.  

In this section, we give a thorough description of our method by first introducing the background knowledge of variational autoencoders in \Cref{sec:vae}, then we describe the proposed \dpVAE in \Cref{sec:dvae}. Next, we formulate the spatial reconstruction and imputation tasks as a tractable optimization problem. Finally, we comment on the theoretical connections between this distance-preserving loss and the distortion and bi-Lipschitz condition in \Cref{sec:theory}.

\subsection{Background: Variational Autoencoder}
\label{sec:vae}

Variational autoencoders (VAEs) are representation-learning models that have been proven to be capable of effectively extracting representations from gene expression data.  
The architecture of a traditional VAE includes an encoder-decoder model designated by the posterior probability $p_\theta(z | x)$ and the likelihood $q_\phi(x | z)$, where $x$ is the gene expression, and $z$ is the latent variable that lives in low-dimensional space $\mathbb{R}^{m'}$ ($m' \leq m$). 
The objective of VAE is to estimate the parameterized marginal distribution $p_\phi(x)$ of the gene expression
\begin{equation}
    \label{eq:lvm}
    p_\phi(y) = \int p_\phi(y | z) p(z) d z,
\end{equation}
where $p(z)$ serves as a predefined prior distribution for the latent variable.
Given the computational inefficiencies linked to the normalizing constant in likelihood calculations, VAEs prioritize maximizing the evidence lower bound (ELBO) obtained via variational inference as opposed to directly optimizing \eqref{eq:lvm}:
\begin{equation}
    \label{eq:elbo}
    \underbrace{
    \mathbb{E}_{z \sim q_\theta(z|x)} \log p_\phi(x | z)}_{- \mathcal{L}_{\rm recon}} - \underbrace{D_{\rm KL}\left(q_\theta(z | x) \Vert p(z) \right)}_{\mathcal{L}_{\rm KL}}.
\end{equation}
This ELBO format is proven to be a lower bound to the marginal likelihood $\log p_\phi(y)$; the derivation is included in the appendix.
We refer to the negative of the first term as the reconstruction loss \footnote{To see why it is called the reconstruction term, if setting the likelihood and the posterior distribution to Gaussian distributions, the term is simplified to $\Vert \widehat{x}_i - x_i \Vert^2$, which is the standard autoencoder reconstruction loss.} $\mathcal{L}_{\rm recon}$, and the negative of the second term as the KL divergence $\mathcal{L}_{\rm KL}$.
Optimizing \eqref{eq:elbo} involves tuning parameters $\theta$ and $\phi$, essentially conducting representation learning. This process enables the encoder network $p_\theta(x)$ to effectively translate signals from an ambient space $\mathbb{R}^D$ into a fundamental subspace $\mathbb{R}^d$ of lower dimensions.
  
However, the existing encoding paradigm presented in \eqref{eq:elbo} reveals implicit biases, undermining its efficacy. Two primary concerns highlight this limitation:
($i$) The likelihood-focused optimization objective, while capturing some meaningful signals, underscores probabilistic aspects that could be incorrectly specified due to cross-dependencies or non-identical distributed genomic samples. This could lead to entangled and biased representations of spatial context being learned from the training data;
($ii$) In the meantime, directly inserting the spatial information as part of the input variable violates our assumption about the setting where spatial context is not available as input during the inference stage, but rather should be the output of the model.
Therefore, there is a need to develop a new method that incorporates likelihood-based generative modeling strategies with spatial information.

\subsection{Distance-Preserving Variational Autoencoder}\label{sec:dvae} 

We propose to leverage the spatial information by incorporating it in the objective function as a distance-preserving regularization term of the VAE model. First, we define the \textit{distance-preserving loss}
\begin{equation}
    \label{eq:dp}
    \mathcal{L}_{\rm DP} = \frac{1}{n^2} \sum_{i, j} \left\vert \Vert z_{i} -  z_{j} \Vert - \lambda \Vert s_{i} - s_{j} \Vert \right\vert,
\end{equation}
where $\left\Vert \cdot \right\Vert$ is the Euclidean norm.
The intuition behind the design of this loss is to ensure that the latent space preserves the relative pairwise distances between data points in the original space, while the scaling factor $\lambda$ can be adjusted appropriately to account for differences in the variance of the underlying prior distributions of the latent and original spaces.
This encourages the model to maintain geometric consistency while learning meaningful representations of gene expression aligned with the objective of VAE.
Next, we incorporate this loss as a weighted regularizer in the objective function of the training model
\begin{equation}
\label{eq:objective}
    \alpha_1 \left( \mathcal{L}_{\rm recon} + \beta \mathcal{L}_{\rm KL} \right) + {\alpha_2 \mathcal{L}_{\rm DP}},
\end{equation}
where the reconstruction term $\mathcal{L}_{\rm recon}$ and KL divergence term $\mathcal{L}_{\rm KL}$ are as defined in \eqref{eq:elbo}. The outer-level regularization coefficients $\alpha_1, \alpha_2 \leq 0$ control for the numerical balance of the distance-preserving loss and the ELBO loss, and $\beta$ is the temperature coefficient for the KL-divergence term \cite{higgins2017beta}.
The model is trained by learning the parameters of the encoder and decoder networks $\theta$ and $\phi$ as well as the scaling parameter $\lambda$ to minimize \eqref{eq:objective}.

\begin{figure*}[!ht]
    \centering
    \includegraphics[width=0.8\linewidth]{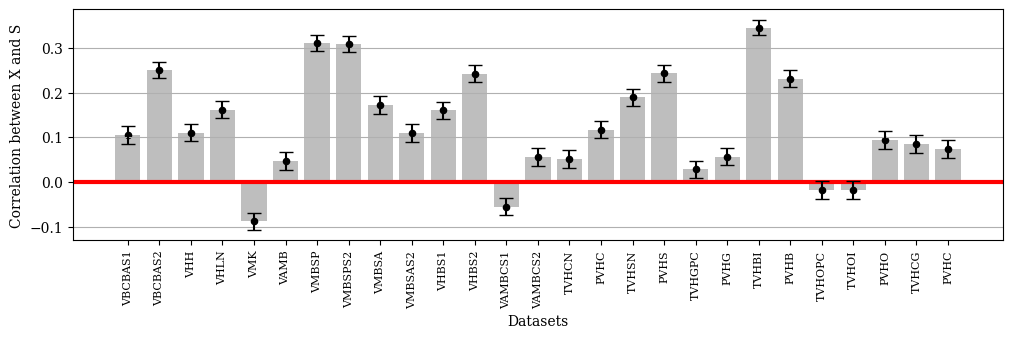}
    \caption{Pearson correlation between the pairwise distance of gene expression ($X$) and the pairwise distance ($S$) of spatial location, computed on 27 spatial transcriptomics datasets. The red line indicates zero correlation, and the error bars are the $95\%$ confidence intervals of the correlation coefficient calculated using Fisher's Z transformation.   
    }
    \label{fig:corr-X-S}
\end{figure*}

The purpose of introducing the distance-preserving penalty term can be viewed from two perspectives:
\begin{itemize}
    \item[($i$)] From the biological perspective, neighboring cells, often from similar tissue types, tend to exhibit homogeneity in attributes like gene expression and biological functions. Representation-based generative models, which aim to capture such spatial relationships and tissue-level patterns, should utilize such geometric structures in learning disentangled representations that are meaningful.
    Prior works \cite{hu2021spagcn,sun2020statistical} have illustrated that proximity in physical distance between cells has a strong association with their gene expressivity.
    This is also the initial of many spatial transcriptomics machine learning models to incorporate spatial context as inputs to enhance the modeling of cellular behavior and tissue organization, enabling more robust, context-aware insights into biological systems. 
    If viewed reversely, it is also reasonable to claim that the learned gene expression representations can also reveal the associated spatial context from where they are located in the tissue.
    \item[($ii$)] From the modeling perspective, the penalization term directs the model to focus on capturing useful signals from the gene expression, which might serve as marker genes for its spatial information.
\end{itemize}
It is noted that the first point is also supported by a simple experiment we conducted, illustrated in \Cref{fig:corr-X-S}. It can be seen that out of the 27 datasets that we are using (see the experiment section for a detailed description), 23 datasets show a statistically significant ($95\%$) positive correlation between the spatial distance of the sample locations and the gene expression similarity. This highlights the potential of seeing the spatial context as a useful source of information to aid the learning of gene expression representations, or, conversely, gene expression representations are capable of revealing information about their associated spatial context.
{
\color{black}
Though we want to note that, despite the statistical significance shown in \Cref{fig:corr-X-S}, the magnitude of the correlations differs across datasets, with some close to zero. This means that the theoretical maximum amount of information that can be extracted from the association could be weak for certain datasets, which could lead to high variance and instability for poorly constructed estimation models. This further motivates our study, which aims at constructing a powerful machine learning architecture that exploits such information for accurate spatial reconstruction and imputation. 
}

\begin{figure}[!ht]
    \hspace{-7ex}
    \includegraphics[width=1.3\linewidth]{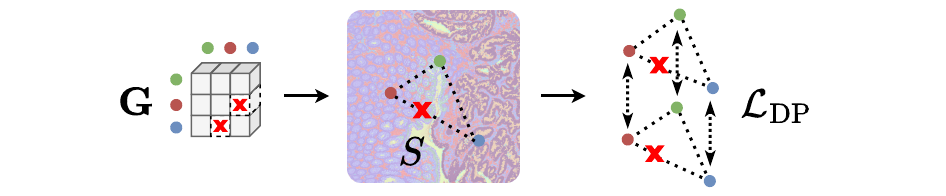}
    \caption{An illustrative example of the masking procedure is shown. Suppose we have three spatial locations, so the mask matrix $\mathbf{G}$ is $3 \times 3$. One user-specified connection in $S$ is masked during training by setting the corresponding entries in $\mathbf{G}$. Consequently, the distance-preserving regularization in $\mathcal{L}_{\rm DP}$ is not enforced between those two locations.}
    \label{fig:masking}
\end{figure}

In practice, we recommend choosing a balanced penalization coefficient $\alpha_1$ and $\alpha_2$ to strike a good balance between the statistical accuracy (controlled by the reconstruction and KL term) and the spatial signal preservation (distance preserving term), achieving strong interpolation and generalization across different datasets.
Also, we recommend a masked version of the distance-preserving:
\begin{equation}
    \label{eq:stress}
    \widetilde{\mathcal{L}}_{\rm DP}
    = \frac{1}{n^2} \sum_{i, j} \mathbf{G}_{i, j} \left\vert \Vert z_{i} -  z_{j} \Vert - \lambda \Vert s_{i} - s_{j} \Vert \right\vert,
\end{equation}
where $\mathbf{G}$ is a $\left\{0, 1\right\}^{n \times n}$ mask matrix that denotes the spatial connectivity of the cells, prespecified by the user. 
This mask allows the incorporation of biological knowledge to selectively emphasize spatial relationships that are most relevant for model training.
For example, if one has prior knowledge about the association between cells indexed by $i$ and $j$, then $\mathbf{G}_{i, j}$ can be set to one. The matrix $\mathbf{G}$ can also be specified by algorithms such as gene expression clustering, allowing it to generalize to different tasks where spatial information is present for model training. An illustration of the masking procedure is provided in \Cref{fig:masking}.

\subsection{Spatial Reconstruction and Imputation}\label{sec:recon}

In the next step, we aim to carry out the spatial reconstruction and imputation task, where we only have inference data consisting of gene expression but not spatial context information.
Assuming that we have access to some gene expression data that are paired with spatial context information and are statistically similar to the inference data, then \dpVAE can be trained on a reference dataset, but \textit{transferred} to the inference data for application.   
Specifically, since \dpVAE encodes the spatial context patterns in the latent representation via a distance-preserving regulated mapping, we can reversely find the optimal spatial coordinates in the spatial domain that align with the distance patterns in the extracted latent space. This can be formulated as an optimization problem regarding the estimated spatial coordinates. Additionally, if our task is imputation, meaning that some spatial coordinates may be exactly available, the solution to the estimation is, therefore, an optimization problem with constraints that these exact coordinates must match.

Formally, the procedure is as follows:
First, we obtain the latent embeddings $z_i$ for each gene expression in the inference dataset indexed $i = 1, \ldots, N$ by passing them through the encoder network $p_\theta$ of the fitted \dpVAE. Then, solve the following optimization problem to obtain the estimated spatial information $\widehat{s}_i$,
\begin{align}
    \min_{\widehat{s}_i} \quad & \sum_{i, j} \left\vert \Vert z_{i} -  z_{j} \Vert - \Vert \widehat{s}_{i} - \widehat{s}_{j} \Vert \right\vert \label{eq:opt} \\
    \mathrm{s.t.} \quad & \widehat{s_i} = s_i, \forall i \in \mathcal{I},\label{eq:const}
\end{align}
where $\mathcal{I}$ denotes the index set of the entries whose exact spatial coordinate is known in the imputation task.
Intuitively, the objective \eqref{eq:opt} enforces the pairwise distance of the estimated spatial information to be consistent with the latent embeddings, and the constraint \eqref{eq:const} requires that the estimated spatial information be the same for all known spatial coordinates.
The problem can be relaxed to an unconstrained optimization problem by modifying the objective function to a penalized version: $\sum_{i, j} \left\vert \Vert z_{i} -  z_{j} \Vert - \Vert \widehat{s}_{i} - \widehat{s}_{j} \Vert \right\vert + \gamma \sum_{i \in \mathcal{I}} | \widehat{s_i} - s_i |$ for easier solving, where $\gamma$ denotes the regularization coefficient (\textit{i.e.} Lagrange multiplier).

The objective function \eqref{eq:opt} is highly similar to the distance-preserving loss defined in \eqref{eq:dp}.
Observe \eqref{eq:dp} encodes the distance-preserving property into the learned latent embedding, while \eqref{eq:opt} extracts the embedded distance-preserving property to reconstruct spatial information. This can be viewed as an implicit second set of encoder-decoder-structure, which we specifically utilize, aside from the first set of encoder-decoder-structure in VAE.
However, a subtle difference between the two losses is that the parameters to be optimized in \eqref{eq:opt} and \eqref{eq:dp} are fundamentally different, where \eqref{eq:opt} is parameterized by the spatial information $s_i$, and \eqref{eq:dp} is parametrized by the parameters in the encoder network $\theta$.

\begin{figure}[!h]
    \centering
    \includegraphics[width=0.9\linewidth]{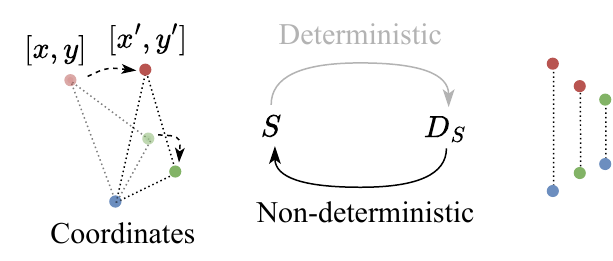}
    \caption{Illustration of forward and inverse mappings. The mapping that recovers the coordinates $S$ from pairwise distances $D_S$ is not unique or non-deterministic when $\left| \mathcal{I} \right| < 3$.}
    \label{fig:mapping}
\end{figure}

The cardinality of the observed spatial information index set $\mathcal{I}$ plays a critical role in the solution of the optimization problem.
The optimal solution to \eqref{eq:opt} is only unique when $\left| \mathcal{I} \right| \geq 3$ but otherwise not, as outlined in \Cref{fig:mapping}.
When $\left| \mathcal{I} \right| = 0$, the problem boils down to the classical multidimensional scaling (MDS) \cite{saeed2018survey} objective, which is a well-studied dimensionality reduction problem and can be solved via, \textit{e.g.}, the SMACOF algorithm \cite{leeuw1977application}.

Additionally, when $\left| \mathcal{I} \right| = 0$, the optimization procedure can also be skipped:
{
\color{black}
By directly setting the latent space dimension to $d=2$, the learned embedding naturally represents estimated spatial coordinates ($z_i = \widehat{s}_i$), and could be naturally adopted in spatial reconstruction and imputation tasks.
However, when latent representations are intended for other applications, such as domain identification, trajectory inference, and gene expression imputation, $d=2$ is not recommended, as the intrinsic dimension of gene expression manifolds typically exceeds two. Deliberately constraining to two dimensions may over-compress features, causing information loss and reducing effectiveness for these tasks.
}
Tuning for the latent dimension could also require a changed set of training hyperparameters, which should be handled with caution.

The whole algorithm is summarized in \Cref{alg:dsr}.
In the ideal case where the algorithm is capable of fully capturing the distribution, it recovers the real spatial information matrix \textit{invariant under translation, scaling, and rotation}.
This invariance is sufficient in practice because the relative spatial relationships between cells are preserved, which is the key information needed for most spatial transcriptomics analyses. The absolute coordinates are often arbitrary and less important than the relative positions and distances between cells in the tissue context.
We will describe how such similarity can be evaluated in the experiment section.

\begin{algorithm}[!t]
\caption{Spatial Reconstruction with \dpVAE}
\label{alg:dsr}
\begin{algorithmic}[1]
    \REQUIRE Training datasets $X_{\rm tr}, S_{\rm tr}$, test dataset $X_{\rm te}$.
    \STATE Train \dpVAE model via maximizing the distance-preserving ELBO \eqref{eq:objective}.
    \STATE Compute latent encoding $\widehat{Z}_{\rm te}$ from the trained encoder using $X_{\rm te}$, and obtain its pairwise distance matrix $\widehat{D}_{\rm te}$.
    \STATE Solve optimization problem in \eqref{eq:opt}-\eqref{eq:const} to obtain coordinate matrix $\widehat{S}_{\rm te}$.
    \RETURN Coordinate matrix $\widehat{S}_{\rm te}$.
\end{algorithmic}
\end{algorithm}

\subsection{Theoretical Property}
\label{sec:theory}

We briefly comment on our findings on the theoretical connection between the proposed distance-preserving loss and the distortion  \cite{chennuru2018measures} or bi-Lipshitz condition\cite{mahabadi2018nonlinear, verine2023expressivity}. Specifically, the enforcement result can be viewed as a probably approximately correct (PAC) learning version of the two, summarized in the following theorem.
\begin{theorem}
    \label{thm}
    For any $\epsilon > 0$, there exist some constants $\lambda > 0$ and
    \begin{equation} \label{eq:prop-L}
        L \lesssim \mathcal{O} \left( \frac{\mathbb{E}\left[ \mathcal{L}_{\mathrm{DP}} \right] }{\lambda \epsilon} \right).
    \end{equation}
    such that the following condition holds with probability greater than $1 - \epsilon$:
    \begin{equation} \label{eq:bi-lip}
        \lambda \| s - s' \| \leq \| z - z' \|  \leq L \cdot \lambda \| s - s' \|,
    \end{equation}
    where the probability arises from the following generation process
    \begin{equation} \label{eq:gen-process}
        (x, s), (x', s') \overset{iid}{\sim} p,
        \ z \sim f_\theta(y),
        \ z' \sim f_\theta(y').
    \end{equation}
\end{theorem}

In \Cref{thm}, expression \eqref{eq:bi-lip} resembles the distortion or bi-Lipschitz condition defined in prior works, except that it only holds probabilistically.
This probability is incurred by the underlying data generation process and the learned VAE generative model.
The distortion constant or bi-Lipschitz constant $L$ is proportional to the expected value of the distance-preserving loss $\mathcal{L}_{\mathrm{DP}}$, which in the ideal case (\textit{i.e.} perfect generalization) scales with the value of its training error. This means that the encoder network becomes ``smoother'' as training progresses, and the distance-preserving loss is what regulates such behavior of this generative model.
We note that this result summarizes and can be potentially generalized to characterize the theoretical properties of the generative networks regulated with distance-preserving loss, as it is not unique to the \dpVAE framework we are using in this paper.
We provide the exact bound to Theorem~\ref{thm} in the appendix. 

\section{Experiments}
\label{sec:experiment}

\begin{figure}
    \centering
    \includegraphics[width=0.7\linewidth]{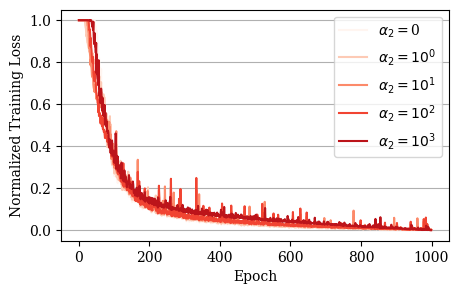}
    \caption{Training stability for different choices of regularization coefficients for the distance-preserving loss over $1 \times 10^3$ epochs of training, averaged over 27 datasets.}
    \label{fig:training-robustness}
\end{figure}

\begin{figure*}
    \centering
    \includegraphics[width=0.9\linewidth]{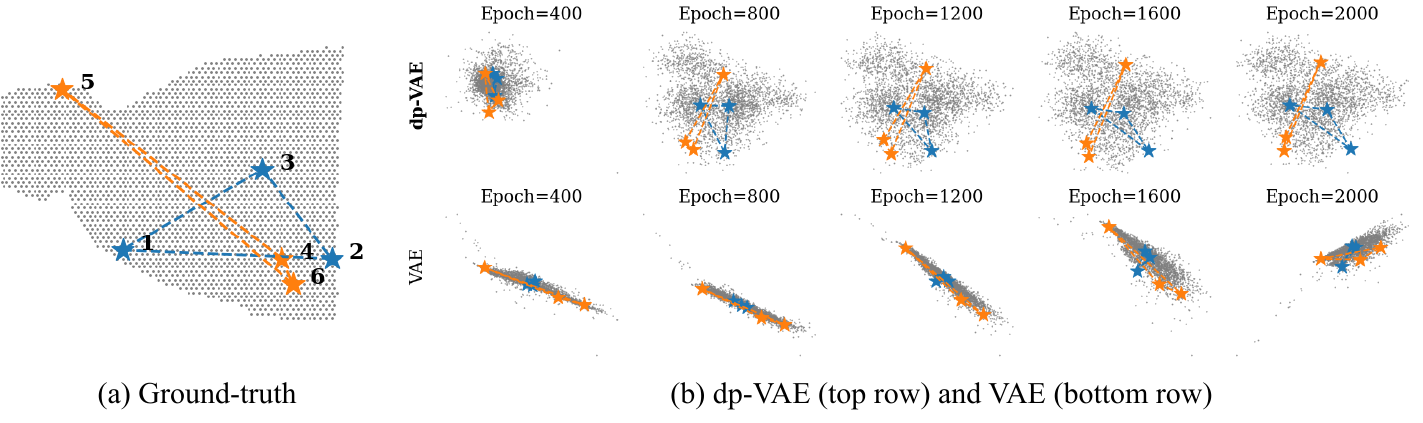}
    \vspace{-1ex}
    \caption{Latent space visualization over different training epochs.
    Each dot represents the 2-dimensional embedding of a cell.
    The blue and orange stars represent two randomly selected triplets of spatial locations.}
    \label{fig:traning-progress}
\end{figure*}

\begin{figure}
    \centering
    \includegraphics[width=0.7\linewidth]{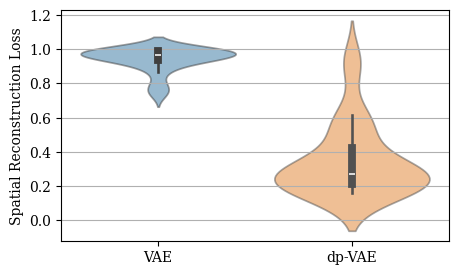}
    \caption{Comparions of out-of-sample spatial reconstruction error between VAE and \dpVAE, evaluated on 27 datasets.}
    \label{fig:baseline-comparison}
\end{figure}

\begin{figure*}
    \centering
    \includegraphics[width=0.7\linewidth]{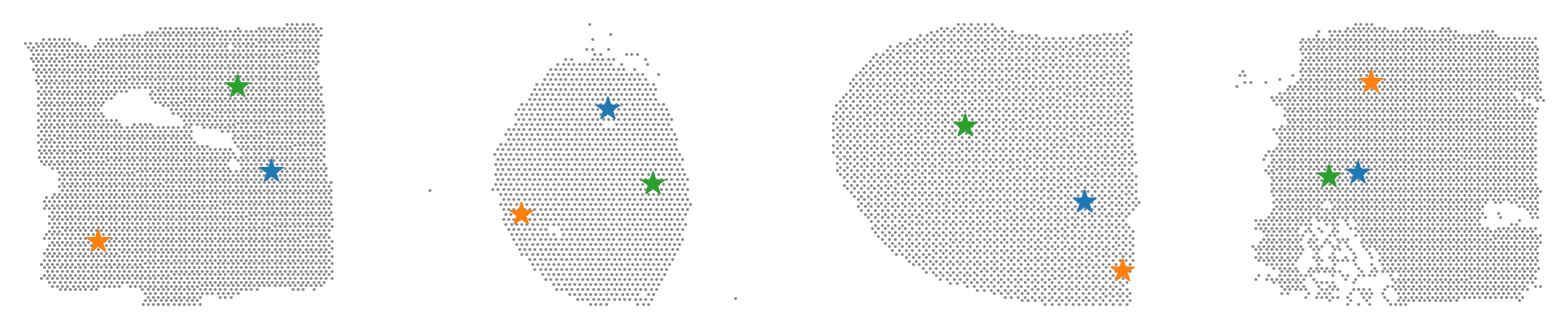}
    \includegraphics[width=0.7\linewidth]{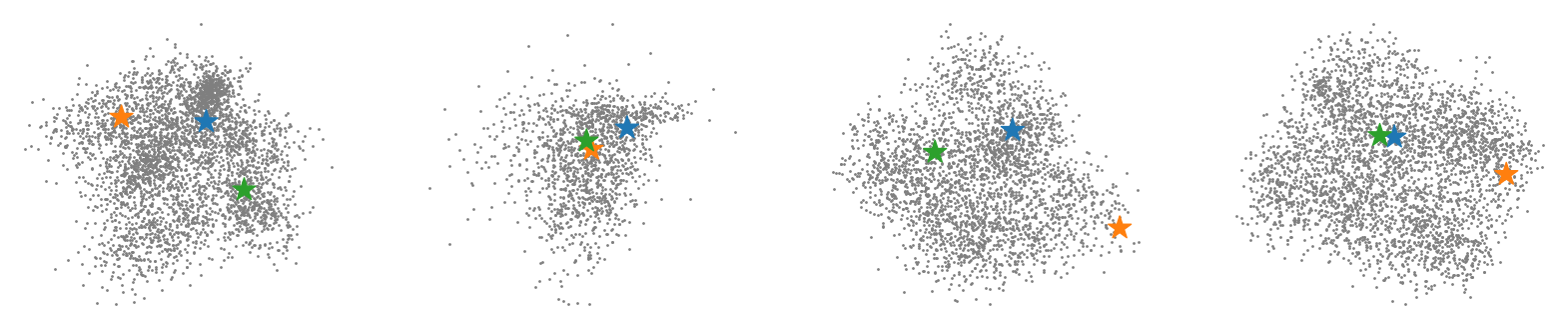}
    \caption{Out of sample (leave-three-out) spatial location imputation. Stars of different colors represent a randomly selected fixed triplet. The top row is the ground truth spatial context, and the bottom row is the reconstructed spatial context.}
    \label{fig:OOS}
\end{figure*}

\begin{figure}[t]
    \centering
    \includegraphics[width=0.8\linewidth]{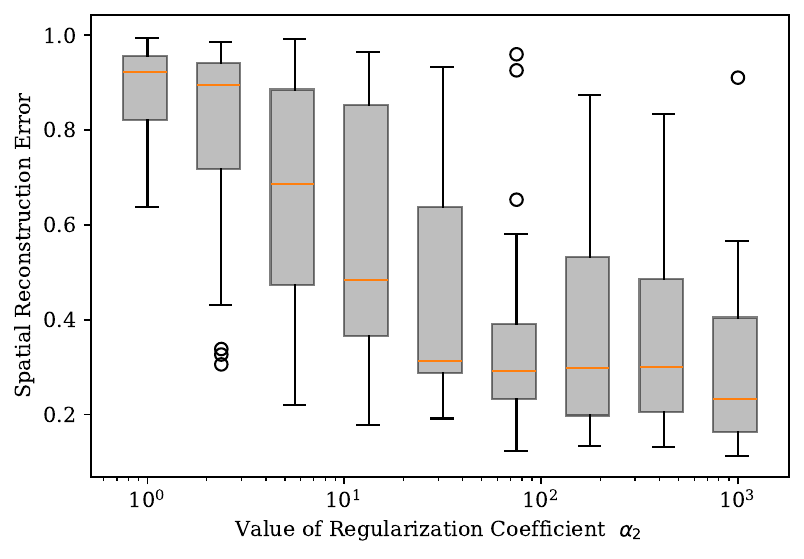}
    \caption{Sensitivity analysis of spatial reconstruction error with different regularization coefficient choices, evaluated over 27 datasets.
    }
    \label{fig:sensitivity-analysis}
\end{figure}

In this section, we conduct experiments to validate the effectiveness of our method and shed light on some of our method's practical implications, including training stability and in-sample behavior of \dpVAE (\Cref{sec:train}), its comparison against vanilla VAE in out-of-sample evaluation (\Cref{sec:oos}), and out-of-distribution inference applications, \textit{i.e.,} transfer learning (\Cref{sec:transfer}). We use a total of 27 processed Visium Spatial Gene Expression data from the 10x Genomics database provided by Scanpy \cite{wolf2018scanpy}.
We only select the top $100$ most variable genes for all datasets to ensure model stability and learning feasibility \footnote{Most datasets only contain roughly around three thousand entries, making the problem high-dimensional and numerically unstable.}
The names of the datasets are the initials of the datasets, whose full names and correspondence can be found in the appendix.
Before we proceed, we first describe our experiment setups in the next subsection.

\subsection{Experiment Setups}
\label{sec:exp-setup}

\vspace{1ex}
{\noindent \bf Model description.}
Since the VAE model is a special case of \dpVAE when the distance-preserving regularization coefficient is set to zero. 
Therefore, we will only describe our setup for the \dpVAE model.
The \dpVAE model consists of an encoder and a decoder network, each comprising a one-layer fully connected neural network with {\color{black} $128$} hidden layers and a latent dimension of $2$; therefore, the procedure of solving for an optimization problem can be bypassed by directly using the latent representations as the estimated spatial coordinates.
The prior, posterior, and likelihood distributions are all specified as Gaussian distributions.
We apply no mask between any pair of entries.
The regularization coefficients $\alpha_1, \beta$ for the ELBO objective are set to one throughout all experiments; therefore, for simplicity, we will only refer to $\alpha_2$ as the regularization coefficient in our experiment section.
The learning rate is set to $5 \times 10^{-4}$ or $10^{-4}$ depending on the experiment, and the total number of training epochs is set to $2000$.
All experiments are run on an NVIDIA GeForce RTX 4070Ti GPU.

\begin{figure}
    \centering
    \includegraphics[width=0.8\linewidth]{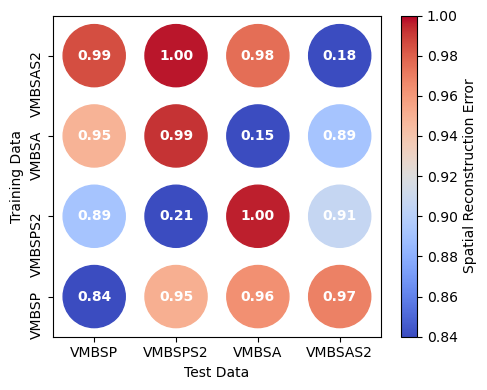}
    \caption{The matrix of out-of-distribution spatial reconstruction error incurred by \dpVAE. The model uses each column of data as the training data to infer four test datasets labeled on the rows.}
    \label{fig:transfer-learning}
\end{figure}

\vspace{1ex}
{\noindent \bf Evaluation protocals.}
We evaluate our model based on the Procrustes distance \cite{goodall1991procrustes, gower1975generalized, krzanowski2000principles} between the reconstructed and the ground-truth spatial information matrix. Formally, for two datasets denoted by matrices $\textbf{D}_1, \textbf{D}_2 \in \mathbb{R}^{n \times m}$, the procedure of calculating the Proscrutes distance between $\textbf{D}_1$ and $\textbf{D}_2$ can be defined as the procedure of finding the optimal translation, rotation, and scaling transformation from $\mathbb{R}^m$ to $\mathbb{R}^m$, such that the Frobenius norm between the two matrices is minimized. Formally, it can be described as
\begin{align*}
    \min_{\textbf{A}, b, c} \quad & \| c \cdot \textbf{A} \textbf{D}_1 + b - \textbf{D}_2 \|_{F}.
\end{align*}
Here $\textbf{A} \in \mathbb{R}^{m, m}$ is an orthogonal matrix, $b \in \mathbb{R}^m$, and $c$ is a scalar.
We will refer to it as the \textit{spatial reconstruction error} in the following sections when clear from context.
{\color{black} Across all of our experiments, the errors are normalized within $[0, 1]$ for brevity of comparison.}

\subsection{In-Sample Training Behaviors}
\label{sec:train}

First, we evaluate the training stability of \dpVAE under various choices of the regularization coefficient. This analysis is crucial because the proposed modification of the original ELBO objective may introduce uncertainty about whether the new objective ensures convergence or risks numerical instability.

Figure~\ref{fig:training-robustness} shows the normalized training loss curve of the \dpVAE model with different choices of the regularization coefficient $\alpha_2$ averaged over 27 datasets used as training data. Our results show that increasing $\alpha_2$ may introduce small abrupt oscillations in the loss curve; globally, the loss manages to converge at the same point for all choices of $\alpha_2$. This means that different choices of $\alpha_2$ do not significantly impact the convergence speed and stability of the training, making its training behavior almost identical to the standard VAE model. In practice, \dpVAE can be tuned and trained very similarly to a standard VAE. However, it is noted that using large learning rates may amplify the oscillation in the training, making it numerically unstable.

Figure~\ref{fig:traning-progress} reports the reconstructed spatial coordinates at different stages of training (every $400$ epochs) of \dpVAE compared with VAE on the Mouse Brain Sagittal Anterior Section 2 dataset. We randomly pick two sets of three anchor points to help visualize how the distance-preserving property is gradually enforced. It can be seen that the \dpVAE model quickly converges within $800$-th to accurately capture the global shape of the spatial context in the training data.
At more refined resolutions, the relative locations of the anchor points have been accurately recovered by \dpVAE, where the reconstructed triangular shapes are almost identical to the ground-truth data. On the other hand, the reconstructed spatial context by the vanilla VAE model does not converge correctly to the training data.
This implies that the distance-preserving loss in the \dpVAE model enforces disentangled latent representations from gene expressions, and captures meaningful biological signals helpful for the task of spatial reconstruction. 

\begin{figure}
    \centering
    \includegraphics[width=1\linewidth]{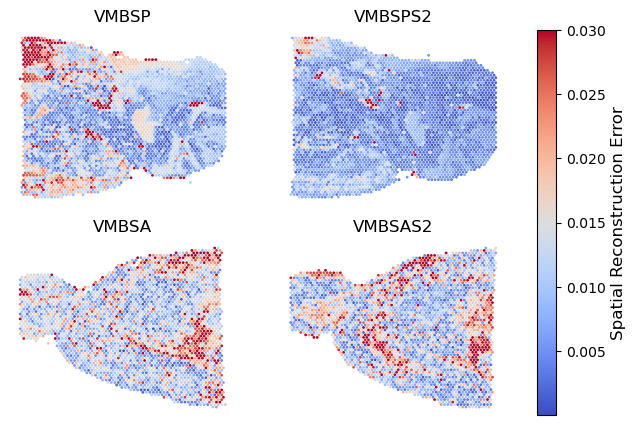}
    \caption{Heatmap of spatial reconstruction error of four Visium mouse brain datasets obtained from \dpVAE, where the training data is selected to be only VMBSP.}
    \label{fig:spatial-reconstruction}
\end{figure}

\subsection{Out-of-Sample Evaluation}
\label{sec:oos}

To evaluate the out-of-sample generalization ability and verify that the \dpVAE does not simply memorize the training data, we carry out comparative studies comparing VAE and \dpVAE, as well as different choices of the regularization coefficient $\alpha_2$ on out-of-sample spatial reconstruction evaluation tasks. We obtain our training and test data by dividing each one of $27$ datasets by a ratio of $80\%-20\%$, yielding 27 training datasets paired with 27 test datasets.
{
\color{black}
We note that the comparison between \texttt{dp-VAE} and VAE primarily serves as an ablation study to assess the effectiveness of the proposed distance-preserving loss. This comparison is also fair, as, to the best of our knowledge, no existing model is designed to exploit spatial information $S$ without requiring it at the inference stage, which is a setting unique to spatial reconstruction tasks defined in this paper (\Cref{sec:problem-def}).
}

Figure~\ref{fig:baseline-comparison} reports the normalized spatial reconstruction error comparing the VAE and \dpVAE models on the 27 test datasets. It can be seen that the \dpVAE significantly outperforms vanilla VAE on spatial reconstruction tasks across almost all datasets, demonstrating the generalization capability of the distance-preserving regularizer. This result aligns with our previous theoretical analysis. Furthermore, Figure~\ref{fig:corr-X-S} indicates a statistical dependence between gene expression and its spatial context, suggesting that spatial context can be inferred from signals derived from gene expressions, even under mild extrapolation.

Figure~\ref{fig:OOS} follows Figure~\ref{fig:baseline-comparison} and reports the detailed reconstructed spatial context (bottom row) with the real spatial context (top row). The three colored spots are random anchor points chosen within the test dataset. It can be seen that the three imputed anchor points are placed in the correct relative position within the reconstructed spatial context. In the meantime, the relative locations and distances of the three points are also accurately represented. This confirms the nuanced rationale underlying the strong performance on the aggregated metric of spatial reconstruction error.

Figure~\ref{fig:sensitivity-analysis} presents the spatial reconstruction error averaged over 27 test datasets, evaluated under varying regularization coefficient values. As the regularization coefficient increases, the test (out-of-sample) spatial reconstruction error decreases logarithmically. This finding further validates that applying stronger distance-preserving regularization in \dpVAE enhances its ability for spatial reconstruction extrapolation. 

However, we note that using a larger regularization coefficient is not always optimal in practice. With larger networks, excessively strong regularization can increase the risk of overfitting the training data, ultimately impairing extrapolation performance. Furthermore, as previously mentioned, a high regularization coefficient can lead to instability during training, potentially causing the learning process to fail. Therefore, selecting an appropriate regularization coefficient requires careful consideration of various practical factors.

\subsection{Out-of-Distribution Evaluation and Inference}
\label{sec:transfer}

In previous sections, we have been evaluating the performance of our methods, where the training and test data are split from the same dataset.
We further apply our method to a more challenging scenario, where the train and test data come from separate datasets, and biological structure may be potentially different in the two populations.
We experiment on four mouse brain tissue spatial transcriptomics datasets, creating $4 \times 4 = 16$ pairs of training and test datasets. We aim to apply the proposed \dpVAE to extract insights about the transfer learning capability between these four datasets. 

\Cref{fig:transfer-learning} visualizes the normalized spatial reconstruction error matrix, where the columns are the test data where the error is calculated, and the rows are the training data where the \dpVAE model is trained. It can be seen that the train-test pairs VMBSPS2--VMBSP, VMBSPS2--VMBSP, and VMBSA--VMBSAS2 incur relatively small error scales compared to other train-test pairs. This may indicate that the joint distribution of gene expression and spatial context between these sets of tissues is similar, and this shared information may be used to infer one another.

Figure~\ref{fig:spatial-reconstruction} shows the detailed view of the spatial reconstruction error at the coordinate level of four test datasets using VMBSP as the training data, corresponding to the bottom row of Figure~\ref{fig:transfer-learning}. 
The results reveal distinct patterns of spatial reconstruction error between different tissue regions, suggesting that heterogeneous gene expression carries varying levels of biological signals closely linked to its spatial context. Further research is anticipated to illuminate the underlying causes of this heterogeneity.

{
\color{black}
Finally, we do want to note that in \Cref{fig:transfer-learning}, the off-diagonal elements of the matrix (out-of-distribution error) take values close to one, while the diagonal elements (in-sample error) remain relatively low and close to zero. This result shows that the out-of-distribution inference of our method has a large error relative to its in-sample inference performance.
We hypothesize that the large errors are primarily attributable to our zero-shot inference setting, where test data originates directly from different brain sectors than the training data without direct overlap.
This setting is particularly challenging because: ($i$) different brain regions exhibit distinct gene expression patterns and spatial organizations, and ($ii$) our model has no prior exposure to the target domain's characteristics during training. Such performance degradation is expected in cross-domain spatial genomics, as the learned spatial-expression mappings may not generalize across anatomically distinct regions.

Given the inherent challenges of zero-shot learning, one potential way to mitigate this limitation is through fine-tuning. In practical genomic studies, researchers often have access to a small amount of target-domain data (or data closely resembling it), which can be leveraged for fine-tuning. This strategy has been extensively tested in prior work and has consistently shown substantial error reductions. Therefore, it is reasonable to hypothesize that, with suitable fine-tuning, the out-of-distribution inference task in our setting could also be improved. In general, achieving strong zero-shot generalization remains difficult and constitutes an important direction for future research.
}

\section{Conclusion and Discussion}

In this paper, we propose a novel representation learning framework based on a variational autoencoder structure to enable tissue spatial reconstruction and imputation from gene expression data.
Note that our approach relies minimally on spatial transcriptomics in the sense that they are only used during the training stage as an auxiliary variable, which allows for diverse downstream applications that rely on having spatial context to be carried out when such information is unavailable during the inference stage, opening up the possibility of a broad spectrum of genomics-related studies that might be once unavailable on many datasets.
The technical novelty lies in the distance-preserving loss that we introduce to regulate the training process, which helps the model to effectively capture gene expression signals oriented towards describing its spatial context.
Spatial reconstruction can be achieved by solving an optimization problem or automatically achieved when the latent space dimension is set to be the dimension of the spatial context.
We also point out the theoretical gap between the distance-preserving loss or bi-Lipschitz condition in the context of generative modeling. 
Finally, we study the training robustness, out-of-sample inference, and transfer learning performance of our proposed method on 27 spatial transcriptomics datasets to evaluate and demonstrate its empirical effectiveness.

A potential future research direction is to more extensively evaluate the method in more challenging transfer learning settings, such as examining the performance of the method in inferring the spatial context of gene expressions that are at a larger scale or with higher spatial resolution, studying the effect of different masking procedures \cite{chen2022local}, performance in different downstream analysis when integrated with other algorithms, or examine the performance of the method when using reference datasets with higher dissimilarity to the inference datasets. This comparative study might need to incorporate more biological domain expert insights and is more resource-demanding. Nevertheless, we believe that this general framework has strong potential to be modified to various tailored settings.  

\bibliographystyle{ieeetr}
\bibliography{ref}

\appendices

\section{Omitted Proofs in \Cref{sec:theory}}
\label{app:proof}

In this section. we provide the detailed proof for \Cref{thm} in \Cref{sec:theory}.
We begin by describing a technical lemma that will be used in the main proof.

\begin{lemma} \label{lemma:dist-bound}
    For any given distribution defined on some space $\mathcal{X} \subseteq \mathbb{R}^n$, given arbitrary $ 0 <\delta < 1$, there exist some $0 < M_1 < M_2 < +\infty$, such that:
    \begin{equation}
        \mathbb{P}\left[ M_1 \leq \| x - x'\| \leq M_2 \right] \geq 1 - \delta,
    \end{equation}
    where $x, x'$ are drawn i.i.d. from this distribution.
\end{lemma}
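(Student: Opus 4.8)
The plan is to regard $D := d_{\mathcal{X}}(x,x')$ as a single nonnegative, a.s.\ finite real-valued random variable under the product measure obtained by drawing $x,x'$ i.i.d.\ from the given distribution, and to bound its two tails separately using elementary continuity properties of probability measures; the conclusion then follows from a union bound.

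First I would handle the \emph{upper tail}. Since $\mathcal{X} \subseteq \mathbb{R}^n$ and $d_{\mathcal{X}}$ is a metric, $D$ is finite for every pair $(x,x')$, so the events $\{D \le M\}$ increase to the full sample space as $M \to \infty$. Continuity of measure from below gives $\mathbb{P}(D \le M) \to 1$, hence there is an $M_2 < \infty$ with $\mathbb{P}(D > M_2) < \delta/2$.

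Next the \emph{lower tail}. As $M \downarrow 0$, the events $\{D < M\}$ decrease to $\{D = 0\} = \{x = x'\}$. Invoking the (implicit) assumption that $p(x)$ is non-atomic --- equivalently, that the diagonal of $\mathcal{X}\times\mathcal{X}$ is null for the product measure, which follows from Fubini's theorem whenever the marginal has no point masses --- we get $\mathbb{P}(D = 0) = 0$. Continuity of measure from above then yields $\mathbb{P}(D < M) \to 0$, so there is an $M_1 > 0$ with $\mathbb{P}(D < M_1) < \delta/2$; shrinking $M_1$ further if needed, we may also take $M_1 < M_2$. A union bound then gives $\mathbb{P}\big(\{D < M_1\} \cup \{D > M_2\}\big) < \delta$, i.e.\ $\mathbb{P}(M_1 \le D \le M_2) \ge 1 - \delta$, which is the claim since $|D| = D$.

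The only step that is not purely routine is the lower bound: without ruling out an atom of $D$ at $0$ (i.e.\ the event $x = x'$ having positive probability) no positive $M_1$ can exist, so the lemma genuinely needs a mild regularity hypothesis on the data distribution. I expect the cleanest fix is to record non-atomicity of $p(x)$ as a standing assumption --- already consistent with the continuous gene-expression and spatial models used throughout the paper --- after which the argument above goes through verbatim.
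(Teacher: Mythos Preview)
Your argument is correct and takes a different route from the paper. The paper works on the underlying space rather than on the law of $D=d_{\mathcal{X}}(x,x')$: in the bounded case it sets $M_1=\inf_{x,x'}d_{\mathcal{X}}(x,x')$ and $M_2=\sup_{x,x'}d_{\mathcal{X}}(x,x')$, and in the unbounded case it exhausts $\mathbb{R}^n$ by nested balls $B(0,r)$, uses continuity of measure to find $R'$ with $\mathbb{P}\bigl(\bigcup_{r\le R'}B(0,r)\bigr)\ge 1-\delta$, and reads off $|d_{\mathcal{X}}(x,x')|\le 2R'$. This really only handles the upper tail: the paper's proposed $M_1$ is either $\inf_{x,x'}d_{\mathcal{X}}(x,x')=0$ or literally $-2R'<0$, so neither secures the claimed $0<M_1$. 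Your two-tail decomposition on $D$ via continuity of measure plus a union bound is more elementary and is the only one of the two arguments that actually produces a strictly positive $M_1$; it also makes explicit the non-atomicity hypothesis that the paper glosses over but which, as you correctly observe, is genuinely required for any positive lower bound to exist.
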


\begin{proof}
    If $\mathcal{X}$ is a finite space or bounded space, then simply take:
    \[
    M_1 = \inf_{x, x' \in \mathcal{X}} \| x - x'\|, \
    M_2 = \sup_{x, x' \in \mathcal{X}} \| x - x'\|.
    \]
    If $\mathcal{X}$ is an infinite space or an unbounded space, such as $\mathcal{X} = \mathbb{R}^n$. In this case, since we know that the Euclidean space $\mathbb{R}^n$ can be exhausted by a nested sequence of compact balls centered at the origin:
    \[
    \mathcal{X} \subseteq \mathbb{R}^n = \bigcup_{r=1}^{+\infty} B(0, r),
    \]
    where $B(0, r) := \left\{ x \in \mathbb{R}^n : \| x \| \leq r \right\}$. Therefore, by the total probability property, we know that:
    \begin{equation}
        \mathbb{P}(\mathcal{X}) \leq \mathbb{P} \left[ \bigcup_{r=1}^{+\infty} B(0, r) \right] = 1.
    \end{equation}
    By the dominated convergence theorem:
    \begin{equation}
        \mathbb{P} \left[ \lim_{R \to +\infty} \bigcup_{r=1}^{R} B(0, r) \right] = \lim_{R \to +\infty} \mathbb{P} \left[ \bigcup_{r=1}^{R} B(0, r) \right].
    \end{equation}
    By definition, given any $0 < \delta < 1$, there exists some $R' > 0 $ such that:
    \[
    \mathbb{P}\left[ \bigcup_{r=1}^{R'} B(0, r) \right] \geq 1 - \delta,
    \]
    which implies:
    \[
    \mathbb{P}\left[ -2R' \leq \| x - x'\| \leq 2R' \right] \geq 1 - \delta.
    \]
    Therefore, we can set $M_1 = -2R'$ and $M_2 = 2R'$, and this concludes the proof.
\end{proof}

\begin{theorem}
    \label{thm:equiv}
    Denote $\delta, M_1, M_2$ be given by Lemma \ref{lemma:dist-bound} with $\mathcal{X}$ set to $\mathcal{S}$, let $\mathcal{L}_{\rm DP}$ be defined in \eqref{eq:dp}.
    Then, for any $\epsilon > 0$, there exist some constants $\lambda > 0$ and
    \begin{equation} 
        L \leq \frac{M_2}{M_1} + \frac{\mathbb{E}\left[ \mathcal{L}_{\mathrm{DP}} \right] }{\lambda M_1 (\epsilon - \delta)}.
    \end{equation}
    such that the following condition holds with probability greater than $1 - \epsilon$:
    \begin{equation} 
        \lambda \| s - s' \| \leq \| z - z' \|  \leq L \cdot \lambda \| s - s' \|,
    \end{equation}
    where the probability arises from the following generation process
    \begin{equation} 
        (x, s), (x', s') \overset{iid}{\sim} p,
        \ z \sim f_\theta(y),
        \ z' \sim f_\theta(y').
    \end{equation}
\end{theorem}

\begin{proof}
    Denote the following events in the probability space
    \begin{align*}
        \mathcal{A} & = \left\{  M_1 \leq \| s - s'\| \leq M_2 \right\}, \\
        \mathcal{B} & = \left\{ \lambda \| s - s'\|  - C \leq \| z - z'\| \leq \lambda \| s - s'\| + C  \right\} \\
        \mathcal{C} & = \left\{ \lambda \| s - s'\| \leq \| z - z'\| \leq L \cdot \lambda \| s - s'\| \right\},
    \end{align*}
    we will set
    $
    L = M_2 / M_1 + C / (\lambda M_1)
    $
    for now, which then, by the definition of the three events, there is
    \begin{equation}
        \label{eq:prop-2}
        \mathbb{P}(\mathcal{B} | \mathcal{A}) \leq \mathbb{P}(\mathcal{C} | \mathcal{A}).
    \end{equation}

    On the other hand, using Markov's inequality and by the definition of the distance preserving loss \eqref{eq:dp}, for any $C >0$, we have:
        \begin{align*}
            \frac{\mathbb{E} \left[ \mathcal{L}_{\textrm{DP}} \right]}{C}
            & \geq \mathbb{P}(\left| \| z - z'\| - \lambda \| s - s'\| \right| \geq C).
        \end{align*}
        Rearranging terms on both sides, we get
        \begin{align}
            & 1 - \frac{\mathbb{E} \left[ \mathcal{L}_{\textrm{DP}} \right]}{C} \notag\\
            & \leq \mathbb{P}(\left| \| z - z'\| - \lambda \| s - s'\| \right| \leq C) \notag \\
            & = \mathbb{P}\left( \lambda \| s - s'\|  - C \leq \| z - z'\| \leq \lambda \| s - s'\| + C \right) \notag \\
            & = \mathbb{P}(\mathcal{B}).
            \label{eq:ineq}
        \end{align}
    On the other hand, using the law of total probability, we can bound the probability of event $\mathcal{C}$ by:
    \begin{align*}
        \mathbb{P}(\mathcal{C}) & = \mathbb{P}(\mathcal{C} | \mathcal{A}) \mathbb{P}(\mathcal{A}) + \mathbb{P}(\mathcal{C} | \mathcal{A}^c) \mathbb{P}(\mathcal{A}^c) \\
        & \geq \mathbb{P}(\mathcal{B} | \mathcal{A}) (1 - \delta) \\
        & \geq \mathbb{P}(\mathcal{B}) - \delta \\
        & \geq 1 -{\mathbb{E}\left[\mathcal{L}_\textrm{DP}\right]} / {C} - \delta.
    \end{align*}
    The first line uses the law of total probability, the second line uses \eqref{eq:prop-2} and Lemma \ref{lemma:dist-bound}, and the third line uses the law of total probability and Lemma \ref{lemma:dist-bound}:
    \begin{align*}
        \mathbb{P}(\mathcal{B})
        & = \mathbb{P}(\mathcal{B}|\mathcal{A})\mathbb{P}(\mathcal{A}) + \mathbb{P}(\mathcal{B}|\mathcal{A}^c)\mathbb{P}(\mathcal{A}^c), \\
        & \leq \mathbb{P}(\mathcal{B}|\mathcal{A}) (1 - \delta) + \delta
    \end{align*}
    and the fourth line uses \eqref{eq:ineq}. Set $C = \mathbb{E}\left[\mathcal{L}_\textrm{DP}\right] / (\epsilon - \delta)$ and plug it back into the formula of $L$, then we can organize the previous expression and conclude when:
    \[
    L \leq \frac{M_2}{M_1} + \frac{\mathbb{E}\left[\mathcal{L}_\textrm{DP}\right]}{\lambda M_1 (\epsilon - \delta)},
    \]
    there is
    \[
    \mathbb{P}\left( \lambda \| s - s'\| \leq \| z - z'\| \leq L \cdot \lambda \| s - s'\| \right) \geq 1 - \epsilon.
    \]
    This proves what we desire.
\end{proof}

\section{Additional Related Works} \label{app:more-related-works}

\noindent\textbf{Spatial Data Modeling}
Autoencoders have been widely utilized in the analysis of spatial data due to their powerful representation learning capabilities. Traditional autoencoders (AE) have been explored in several studies \cite{li2024high,wan2023integrating}. For instance, SEDR \citep{xu2024unsupervised} employs a deep auto-encoder network to learn gene representations while incorporating a variational graph auto-encoder to embed spatial information simultaneously, highlighting the integration of spatial and genetic data. The combination of autoencoders with graph neural networks (GNNs) has also proven effective, as shown in works by \cite{long2023spatially,zhou2023integrating,guo2023spiral,liu2023comprehensive,zhang2022graph}, where such models enhance spatial data representation by leveraging graph structures. Additionally, advanced probabilistic models like the hidden Markov random field have been applied to spatial data, as demonstrated by Giotto \citep{dries2021giotto}, which identifies spatial domains using an HMRF model with a spatial neighbor prior. This approach has been extended in studies like \cite{yang2022sc}, further showcasing the potential of probabilistic methods in spatial data analysis.

\noindent\textbf{Spatial Variational Autoencoders}
Compared to the aforementioned approaches, variational autoencoders (VAEs) are recently attracting more attention in representation learning of single-cell data analysis \citep{du2022robust,du2020model}.
They are preferred over traditional autoencoders because they provide a structured, continuous, and interpretable latent space that facilitates robust generative capabilities and handles uncertainty by explicitly modeling distributions. This results in more meaningful data generation and easier manipulation of latent features compared to traditional autoencoders, which can then be used for data integration \citep{du2022robust}, trajectory inference \citep{du2020model}, etc.
Furthermore, the study of VAEs lays the groundwork for extending methodologies to other variational inference-based models, such as diffusion models and variational graph autoencoders.

A series of works have been conducted to study the interplay of VAEs with spatial information in the data, and different variants of spatial VAEs have been proposed. 
\cite{bepler2019explicitly} enables learning VAE models of images that separate latent variables encoding content from rotation and translation. This can be done by directly constraining them as part of the latent space to be learned. 
\citep{wang2019spatial} proposed to instead directly modify the latent space from the standard multivariate Gaussian distributions to low-ranked matrix-variate normal distributions to allow for better capturing ability of the spatial information present in image data. 
\cite{guo2021deep} considered using VAE to learn disentangled representations of spatial networks via careful designs of the model structure and latent variable factorization. They proposed a tractable optimization algorithm to carry out variational learning.
More recently, \cite{semenova2022priorvae} considered the setting of learning spatial Gaussian priors via a variational autoencoder.
\cite{hu2024spatially} combined graph variational autoencoder with contrastive learning.

However, a critical distinct difference to be noted is that while most existing works studying spatial VAE aim to capture the representation of the spatial information, our work aims to refine the representation learned by incorporating the spatial information given as context. This is a different but equally challenging task with broad applications in fields where spatial information is easily accessible, such as spatial transcriptomics data analysis.

\section{ELBO Derivation}
\label{app:elbo-derivation}

Below, we present the derivation of a lower bound for the log probability density function (PDF) of VAE, namely the evidence lower bound (ELBO).
To begin with, the marginal probability of data $x$ can be written as
\[
    \log f_{\theta}(x) = \log \int p_{\theta}(x, z )dz,
\]
where $z$ is a latent random variable.
This integral has no closed form and can usually be estimated by Monte Carlo integration with importance sampling, ie, 
\[
    \int p_{\theta}(x, z )dz = \EE_{z\sim q(\cdot |x )}\left[\frac{p_{\theta}(x, z)}{q(z|x)}\right].
\]
Here $q(z|x)$ is the proposed variational distribution, where we can draw a sample $z$ from this distribution given $x$ and $\ab$. 
Therefore, by Jensen's inequality, we can find the evidence lower bound (ELBO) of the conditional PDF:
\begin{align*}
    \log f_{\theta}(x) & = \log \EE_{z\sim q(\cdot|x )}\left[\frac{p_{\theta}(x, z)}{q(z|x)}\right] \\
    & \geq \EE_{z\sim q(\cdot|x)} \left[ \log \frac{p_{\theta}(x, z)}{q(z|x)}\right].
\end{align*}
Using Bayes rule, the ELBO can be equivalently expressed as:
\begin{align*}
& \EE_{z\sim q(\cdot|x)} \left[ \log \frac{p_{\theta}(x, z)}{q(z|x)}\right] \\
=& \EE_{z\sim q(\cdot|x)} \left[ \log \frac{p_{\theta}(x|z)p_{\theta}(z)}{q(z|x)}\right] \\ 
=& \EE_{z\sim q(\cdot|x)} \left[ \log \frac{p_{\theta}(z)}{q(z|x)}\right] + \EE_{z\sim q(\cdot|x)} \left[ \log p_{\theta}(x|z) \right] \\
=& - D_\text{KL}(q(z|x) || p_{\theta}(z)) + \EE_{z\sim q(\cdot|x)} \left[ \log p_{\theta}(x|z) \right].
\end{align*}
This concludes the derivation.

\section{KL Divergence}

\begin{lemma}[KL divergence of multivariate normals]\label{lem:kl-normal}
    Suppose both $p$ and $q$ are the pdfs of multivariate normal distributions $\cN_d(\bmu_1, \bSigma_1)$ and $\cN_d(\bmu_2, \bSigma_2)$, respectively. 
    The Kullback-Leibler distance from $p$ to $q$ is given by
    \begin{align*}
        & D_\text{KL}(p ||q) =  \frac{1}{2}\left[\log\frac{|\bSigma_2|}{|\bSigma_1|} - d + \tr \{ \bSigma_2^{-1}\bSigma_1 \} \right. \\
        & \hspace{10ex} + \left. (\bmu_2 - \bmu_1)^T \bSigma_2^{-1}(\bmu_2 - \bmu_1)\right].
    \end{align*}
    When $\bSigma_1=\diag(\sigma_1^2,\ldots,\sigma_d^2)$ and $\bSigma_2=\bI_d$, it reduces to
    \begin{align*}
        D_\text{KL}(p ||q) =  - \frac{1}{2}\left[ d + \sum_{j=1}^d\log \sigma_j^2 - \sum_{j=1}^d \sigma_j^2 - \|\bmu_2 - \bmu_1\|_2^2\right].
    \end{align*}
\end{lemma}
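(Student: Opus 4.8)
The plan is to start from the definition of the Kullback--Leibler divergence between two densities and carry out the Gaussian integral directly. Writing $D_\text{KL}(p\|q) = \EE_{x\sim p}[\log p(x) - \log q(x)]$ and plugging in the multivariate normal density $\log\mathcal{N}_d(x;\bmu,\bSigma) = -\tfrac{d}{2}\log(2\pi) - \tfrac12\log|\bSigma| - \tfrac12(x-\bmu)^T\bSigma^{-1}(x-\bmu)$ for both $p$ and $q$, the $-\tfrac{d}{2}\log(2\pi)$ terms cancel and we are left with
\[
D_\text{KL}(p\|q) = \frac12\log\frac{|\bSigma_2|}{|\bSigma_1|} + \frac12\,\EE_{x\sim p}\!\left[(x-\bmu_2)^T\bSigma_2^{-1}(x-\bmu_2) - (x-\bmu_1)^T\bSigma_1^{-1}(x-\bmu_1)\right].
\]
So everything reduces to evaluating the two expected quadratic forms under $x\sim\mathcal{N}_d(\bmu_1,\bSigma_1)$.

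For the second expectation, I would use the standard identity $\EE[(x-\bmu_1)^T A (x-\bmu_1)] = \tr(A\,\bSigma_1)$ for a centered-at-$\bmu_1$ quadratic form; with $A=\bSigma_1^{-1}$ this gives $\tr(\bSigma_1^{-1}\bSigma_1) = d$. For the first expectation, I would write $x-\bmu_2 = (x-\bmu_1) + (\bmu_1-\bmu_2)$, expand the quadratic form into three pieces, note that the cross term $\EE[(x-\bmu_1)^T\bSigma_2^{-1}(\bmu_1-\bmu_2)]$ vanishes because $\EE[x-\bmu_1]=0$, and collect $\EE[(x-\bmu_1)^T\bSigma_2^{-1}(x-\bmu_1)] + (\bmu_1-\bmu_2)^T\bSigma_2^{-1}(\bmu_1-\bmu_2) = \tr(\bSigma_2^{-1}\bSigma_1) + (\bmu_2-\bmu_1)^T\bSigma_2^{-1}(\bmu_2-\bmu_1)$. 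Substituting both back yields
\[
D_\text{KL}(p\|q) = \frac12\left[\log\frac{|\bSigma_2|}{|\bSigma_1|} - d + \tr(\bSigma_2^{-1}\bSigma_1) + (\bmu_2-\bmu_1)^T\bSigma_2^{-1}(\bmu_2-\bmu_1)\right],
\]
which is the first claimed formula.

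For the specialization, I would substitute $\bSigma_1 = \diag(\sigma_1^2,\ldots,\sigma_d^2)$ and $\bSigma_2 = \bI_d$: then $|\bSigma_2|=1$, $|\bSigma_1| = \prod_{j=1}^d\sigma_j^2$ so $\log(|\bSigma_2|/|\bSigma_1|) = -\sum_j\log\sigma_j^2$; $\bSigma_2^{-1}=\bI_d$ so $\tr(\bSigma_2^{-1}\bSigma_1) = \sum_j\sigma_j^2$ and the Mahalanobis term becomes $\|\bmu_2-\bmu_1\|_2^2$. Collecting these gives $\tfrac12[-\sum_j\log\sigma_j^2 - d + \sum_j\sigma_j^2 + \|\bmu_2-\bmu_1\|_2^2]$, which matches the stated reduced form (with the overall sign and bracket arranged as written). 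There is no real obstacle here — the only points requiring care are the quadratic-form expectation identity $\EE_{x\sim\mathcal N(\mu,\Sigma)}[(x-\mu)^TA(x-\mu)] = \tr(A\Sigma)$ (which follows from linearity of trace and expectation, $\EE[(x-\mu)(x-\mu)^T]=\Sigma$) and keeping track of signs when matching the final displayed expression, so the bulk of the work is bookkeeping rather than any genuine difficulty.
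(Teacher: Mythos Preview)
Your proposal is correct and follows essentially the same route as the paper: start from the definition $D_{\mathrm{KL}}(p\|q)=\EE_{x\sim p}[\log p(x)-\log q(x)]$, substitute the Gaussian log-densities so the normalizing constants cancel, and reduce to evaluating the two expected quadratic forms via the trace identity $\EE[(x-\bmu_1)^T A(x-\bmu_1)]=\tr(A\bSigma_1)$. Your decomposition $x-\bmu_2=(x-\bmu_1)+(\bmu_1-\bmu_2)$ to handle the off-center quadratic is exactly what the paper does implicitly, and the specialization to diagonal $\bSigma_1$ and $\bSigma_2=\bI_d$ is straightforward bookkeeping in both.
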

\begin{proof}[Proof of \Cref{lem:kl-normal}]
    
By definition, we have
\begin{align*}
    D_\text{KL}(p ||q)
    & =\int \left[\log( p(\bx)) - \log( q(\bx)) \right]\ p(\bx)\rd \bx \\
    & = \int \left[ \frac{1}{2} \log\frac{|\bSigma_2|}{|\bSigma_1|} - \frac{1}{2} (x-\bmu_1)^T\bSigma_1^{-1}(x-\bmu_1) \right. \\ 
    & \quad + \left. \frac{1}{2} (x-\bmu_2)^T\bSigma_2^{-1}(x-\bmu_2) \right] p(x) \rd x \\
    & = \frac{1}{2} \log\frac{|\bSigma_2|}{|\bSigma_1|} - \frac{1}{2} \tr\ \left\{\EE[(x - \bmu_1)(x - \bmu_1)^T] \ \bSigma_1^{-1} \right\} \\
    & \quad + \frac{1}{2} \EE[(x - \bmu_2)^T \bSigma_2^{-1} (x - \bmu_2)] \\
    & \hspace{-5ex} = \frac{1}{2} \log\frac{|\bSigma_2|}{|\bSigma_1|} - \frac{1}{2} \tr\ \{I_d \} + \frac{1}{2} (\bmu_1 - \bmu_2)^T \bSigma_2^{-1} (\bmu_1 - \bmu_2) \\
    & \hspace{-2ex} + \frac{1}{2} \tr \{ \bSigma_2^{-1} \bSigma_1 \}
\end{align*}
The last expression on the right-hand side translates to:
\begin{align*}
    \frac{1}{2}\left[\log\frac{|\bSigma_2|}{|\bSigma_1|} - d + \tr \{ \bSigma_2^{-1}\bSigma_1 \} + (\bmu_2 - \bmu_1)^T \bSigma_2^{-1}(\bmu_2 - \bmu_1)\right],
\end{align*}
which finishes the proof.
\end{proof}

\section{Additional Experiment Details}
\label{app:additional-exp}

In this section, we show an additional earlier version of the experiment result evaluating the spatial autocorrelation and isometry preserving property of our proposed \dpVAE model. Below, we introduce the two main evaluation metrics used.

\begin{itemize}
    \item Moran's I \citep{li2007beyond} is a correlation coefficient that measures the overall spatial autocorrelation of a dataset. Intuitively, for a given gene, it measures how one spot is similar to others surrounding it. If the spots are attracted (or repelled) by each other, it implies they are not independent. Thus, the presence of autocorrelation indicates the spatial pattern of gene expression. Moran's I value ranges from –1 to 1, where a value close to 1 indicates a clear spatial pattern, a value close to 0 indicates random spatial expression, and a value close to –1 indicates a chessboard-like pattern. To evaluate the spatial variability of a given gene, we calculate Moran's I using the following formula,
    \begin{align*}
        I=\frac{N}{W} \frac{\sum_i \sum_j\left[w_{i j}\left(x_i-\bar{x}\right)\left(x_j-\bar{x}\right)\right]}{\sum_i\left(x_i-\bar{x}\right)^2},
    \end{align*}
    where $x_i$ and $x_j$ are the gene expression of spots $i$ and $j$, $\bar{x}$ is the mean expression of the feature, $N$ is the total number of spots, $w_{ij}$ is the spatial weight between spots $i$ and $j$ calculated using the 2D spatial coordinates of the spots, and $W$ is the sum of $w_{ij}$. We select the $k$ nearest neighbors for each spot using spatial coordinates. Moran's I statistic is robust to the choice of $k$ and is set at 5 in our analysis.
    We assign $w_{ij}=1$ if spot $j$ is in the nearest neighbors of spot $i$, and 0 otherwise.

    \item Geary's $C$ is another commonly used statistic for measuring spatial autocorrelation. It is calculated as:
    $$
    C=\frac{N - 1}{2 W} \frac{\sum_i \sum_j\left[w_{i j}\left(x_i-x_j\right)^2\right]}{\sum_i\left(x_i-\bar{x}\right)^2},
    $$
    The value of Geary's $C$ ranges from zero to two, where zero indicates perfect positive autocorrelation. 
\end{itemize}

{\noindent \bf Evaluation Protocals.}
Moran's I \citep{li2007beyond} is a correlation coefficient that measures the overall spatial autocorrelation of a dataset. Intuitively, for a given gene, it measures how one spot is similar to others surrounding it. If the spots are attracted (or repelled) by each other, it implies they are not independent. Thus, the presence of autocorrelation indicates the spatial pattern of gene expression. Moran's I value ranges from –1 to 1, where a value close to 1 indicates a clear spatial pattern, a value close to 0 indicates random spatial expression, and a value close to –1 indicates a chessboard-like pattern. To evaluate the spatial variability of a given gene, we calculate Moran's I using the following formula,
\begin{align*}
    I=\frac{N}{W} \frac{\sum_i \sum_j\left[w_{i j}\left(x_i-\bar{x}\right)\left(x_j-\bar{x}\right)\right]}{\sum_i\left(x_i-\bar{x}\right)^2},
\end{align*}
where $x_i$ and $x_j$ are the gene expression of spots $i$ and $j$, $\bar{x}$ is the mean expression of the feature, $N$ is the total number of spots, $w_{ij}$ is the spatial weight between spots $i$ and $j$ calculated using the 2D spatial coordinates of the spots, and $W$ is the sum of $w_{ij}$. We select the $k$ nearest neighbors for each spot using spatial coordinates. Moran's I statistic is robust to the choice of $k$ and is set at 5 in our analysis.
We assign $w_{ij}=1$ if spot $j$ is in the nearest neighbors of spot $i$, and 0 otherwise.

Geary's $C$ is another commonly used statistic for measuring spatial autocorrelation. It is calculated as:
$$
C=\frac{N - 1}{2 W} \frac{\sum_i \sum_j\left[w_{i j}\left(x_i-x_j\right)^2\right]}{\sum_i\left(x_i-\bar{x}\right)^2},
$$
The value of Geary's $C$ ranges from zero to two, where zero indicates perfect positive autocorrelation. 

\begin{figure*}[t]
    \centering
    \includegraphics[width=0.49\textwidth]{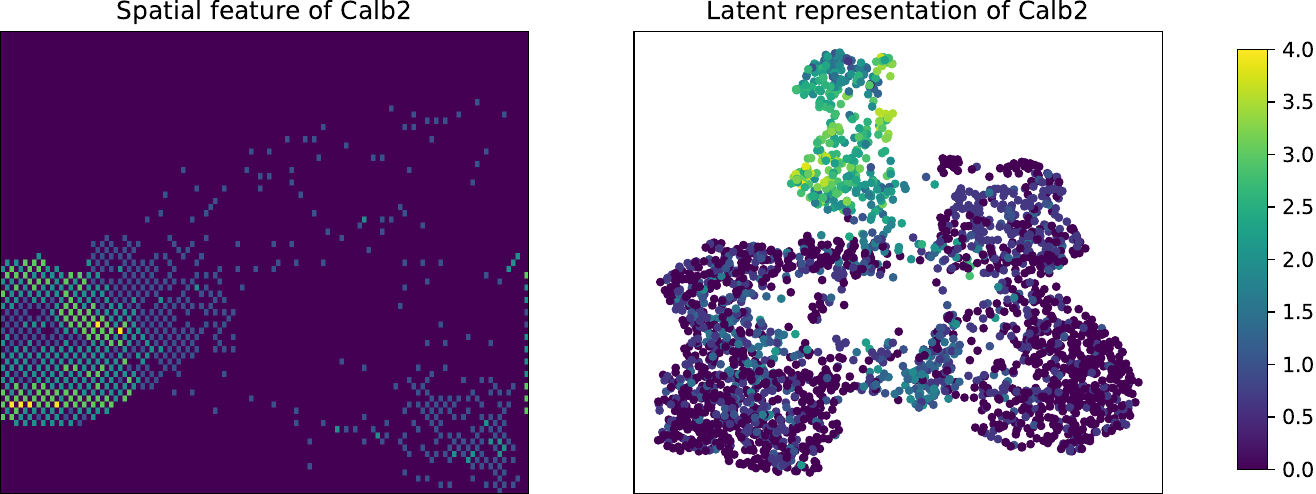}
    \vspace{1ex}
    \includegraphics[width=0.49\textwidth]{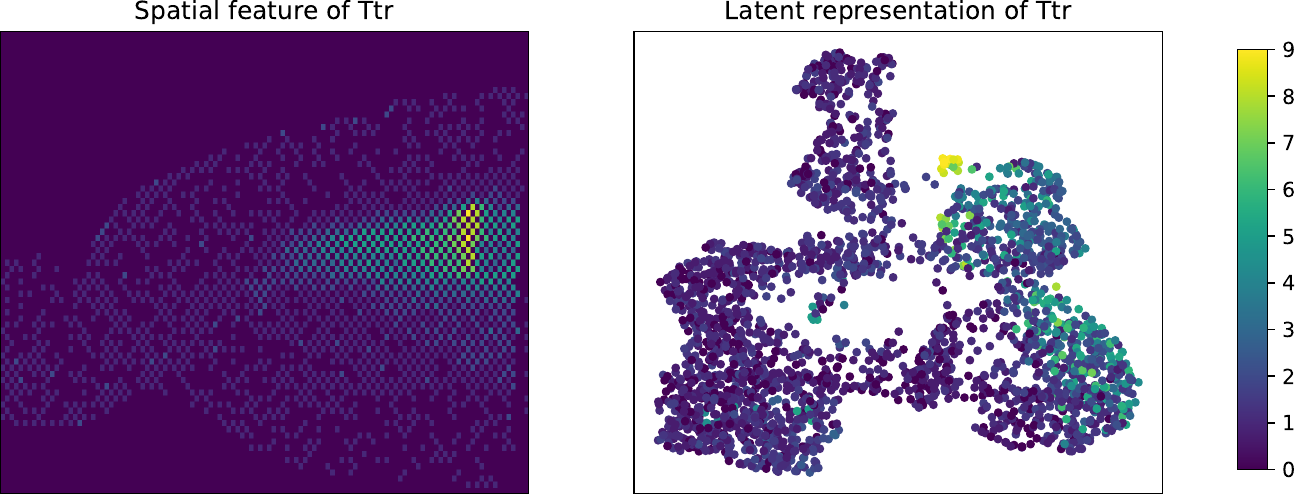}
    \vspace{1ex}
    \includegraphics[width=0.49\textwidth]{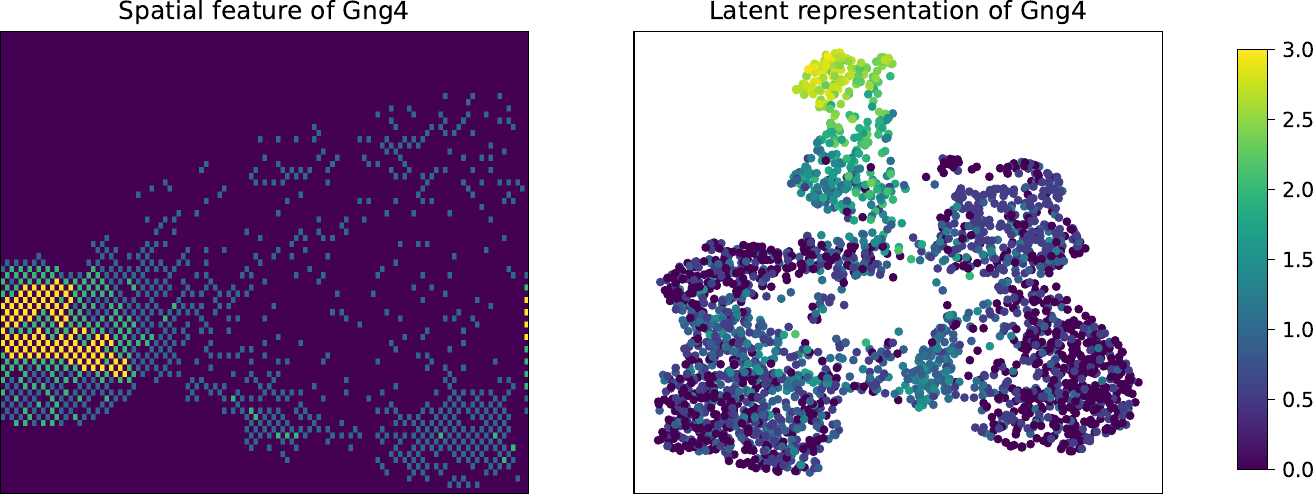}
    \vspace{1ex}
    \includegraphics[width=0.49\textwidth]{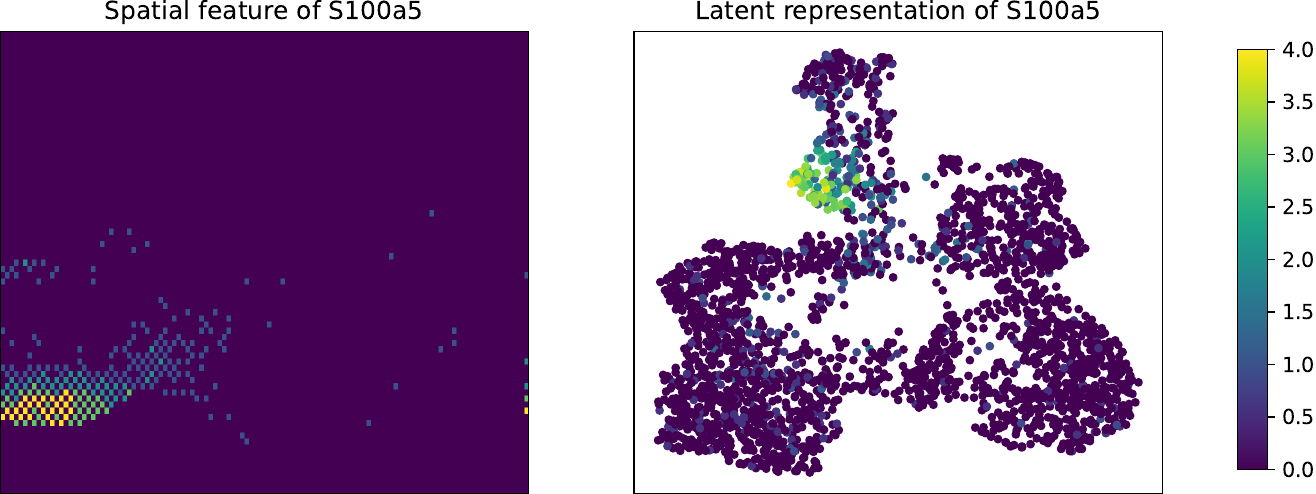}
    \vspace{1ex}
    \includegraphics[width=0.49\textwidth]{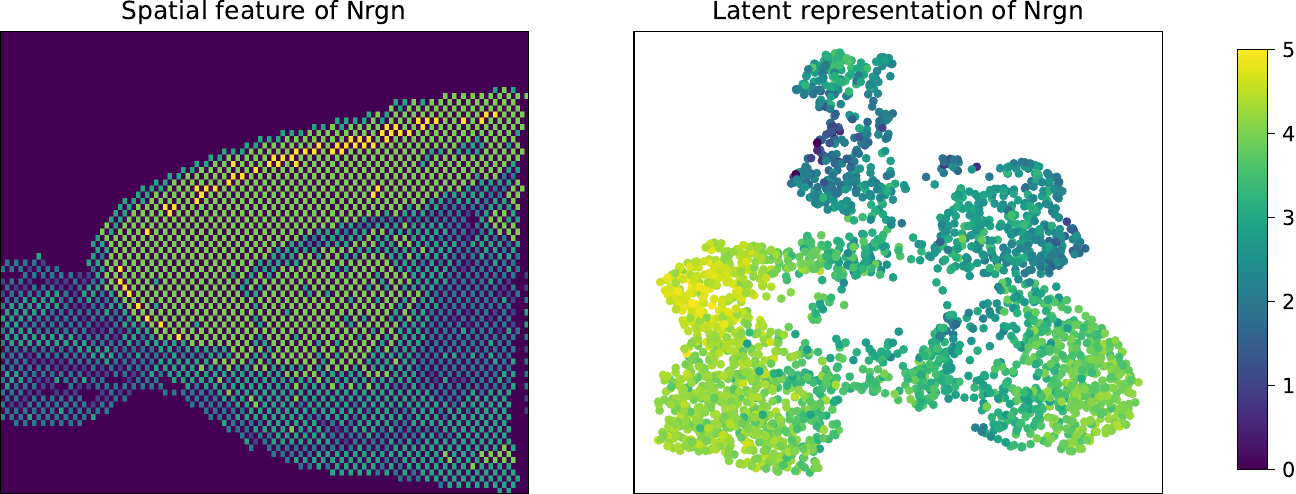}
    \vspace{1ex}
    \includegraphics[width=0.49\textwidth]{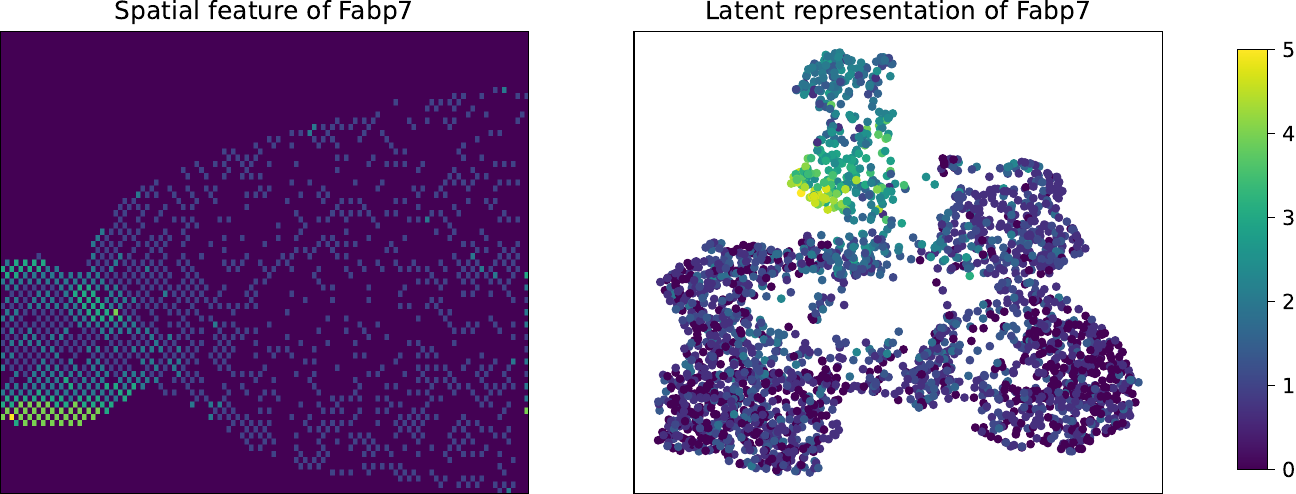}
    \caption{Latent representation extracted from VAE trained on the Mouse Brain Serial Datasets anterior 1. The olfactory bulb region, with genes Calb2, Gng4, S100a5, and Febp7 enriched, is well separated from other regions}
    \label{fig:Z-vae}
\end{figure*}

\begin{figure}[t]
    \centering
    \includegraphics[width=1.0\linewidth]{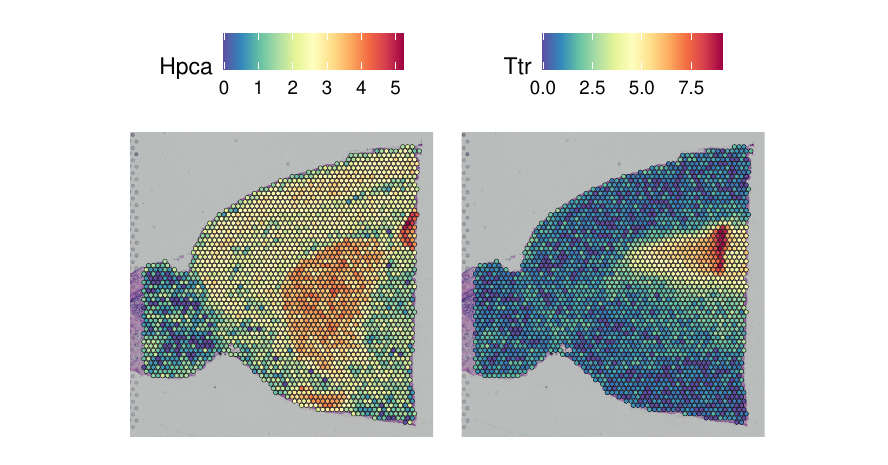}
    \caption{10-lengthed-genes aggregated expression heatmap of the mouse brain cell. Gene expressions show strong spatial trends and local correlations.}
\end{figure}

The experiment results are shown in \Cref{fig:Z-vae}, \Cref{fig:latent}, \Cref{tab:autocorr-results}. 
Figure \ref{fig:Z-vae} shows the gene expression levels of six marker genes extracted by VAE.
For each gene in each panel, the left subplot shows the gene expression levels in the spatial coordinates, and the right subplot shows the 2D-UMAP of the latent representation of the corresponding model. The cells are dispersed in the latent space. The olfactory bulb region, with genes Calb2, Gng4, S100a5, and Febp7 enriched, is well separated from other regions.
\Cref{tab:autocorr-results} shows that enforcing the distance-preserving property induces stronger spatial autocorrelations on out-of-sample datasets.
\Cref{fig:latent} shows that the proposed method can improve the isometric property of the learned latent space. 

\begin{figure}[t]
    \centering
    \includegraphics[width = 0.9\linewidth]{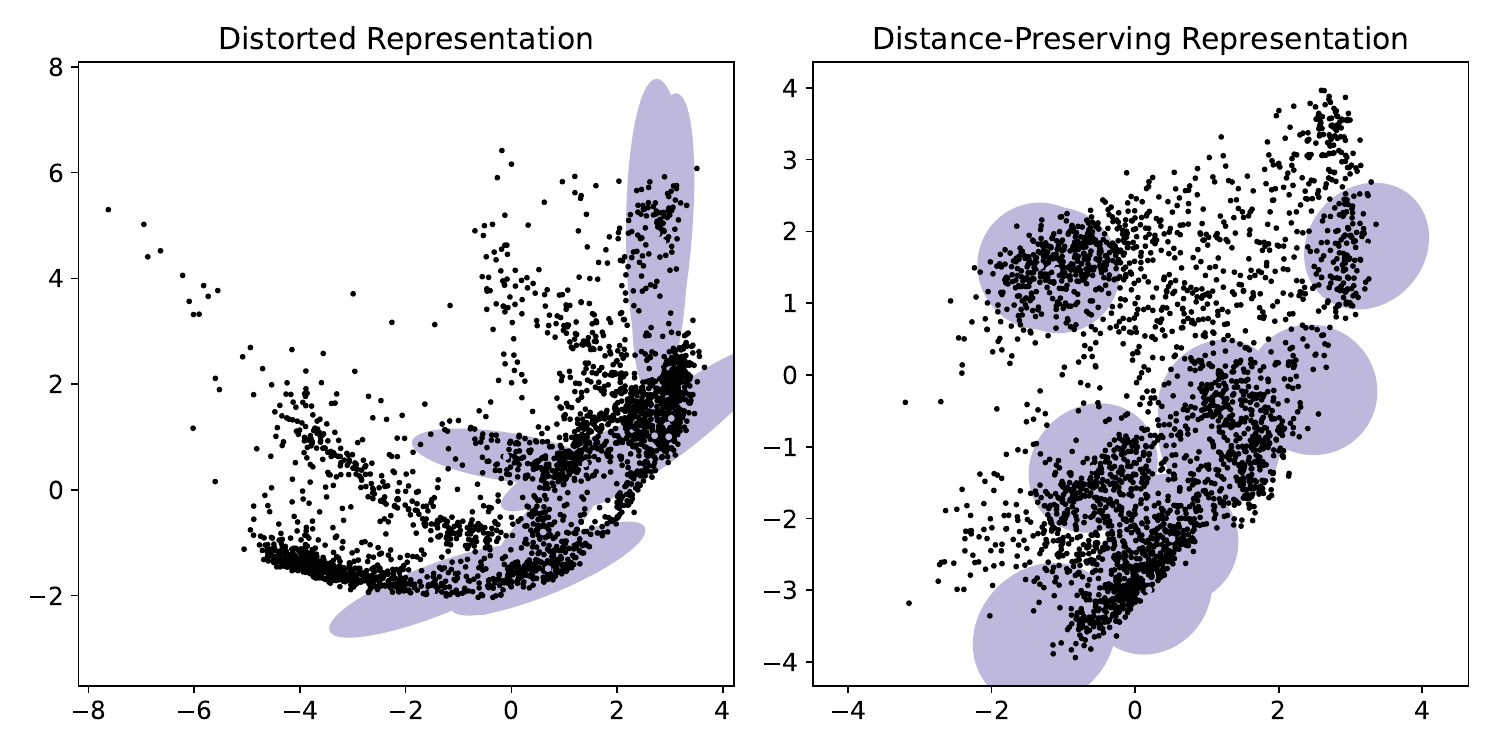}
    \caption{Visualization of latent representation space obtained from scVI (left) and scVI regularized with distance-preserving loss (right).
    More isotropic and homogeneous ellipses indicate more distance-preserving \cite{yonghyeon2021regularized}.}
    \label{fig:latent}
\end{figure}

\begin{table*}
    \centering
    \caption{Moran's I and Geary's C of the latent representation extracted by scVI and VAE using 4 mouse brain cell test datasets, with and without distance-preserving penalty, averaged over 5 repeated trials to account for the randomness of the training process.}
    \begin{tabular}{cccccccccc}
        \toprule
        \midrule
        & \multirow{2}{*}{Method} & \multicolumn{4}{c}{Moran's I} & \multicolumn{4}{c}{Geary's C} \\

        \cmidrule(lr){3-6} \cmidrule(lr){7-10}
        
        & & A2 & A1 & P2 & P1 & A2 & A1 & P2 & P1 \\

        \midrule

        \multirow{2}{*}{Vanilla}
        & VAE   & 0.62(0.07) & 0.55(0.05) & 0.52(0.05) & \textbf{0.52(0.03)} & 0.36(0.06) & 0.41(0.03) & 0.49(0.05) & \textbf{0.43(0.03)} \\
        & scVI  & 0.43(0.03) & 0.52(0.04) & 0.37(0.02) & 0.45(0.04) & 0.57(0.03) & 0.48(0.04) & 0.62(0.02) & 0.55(0.04) \\
        
        \midrule

        \multirow{2}{*}{\shortstack{Distance \\ Preserving  }}
        & VAE   & \textbf{0.64(0.02)} & \textbf{0.60(0.03)} & \textbf{0.56(0.06)} & 0.49(0.06) & \textbf{0.35(0.02)} & \textbf{0.37(0.02)} & \textbf{0.45(0.07)} & 0.46(0.05) \\
        & scVI  & \textbf{0.45(0.04)} & 0.52(0.04) & \textbf{0.43(0.02)} & \textbf{0.47(0.03)} & \textbf{0.55(0.05)} & 0.48(0.04) & \textbf{0.57(0.02)} & \textbf{0.53(0.03)} \\
         \midrule
         \bottomrule
    \end{tabular}
    \label{tab:autocorr-results}
\end{table*}

{
\color{black}
{\noindent \bf Transfer Learning Setup.}
Across all of our experiments, we used highly variable genes (HVGs), meaning that they were selected based on variance in gene expression across cells. We chose HVGs over spatially variable genes (SVGs) because HVGs capture the most informative expression patterns for representation learning, and they are more likely to be consistently expressed across different datasets and brain regions.

The adequate expression of these selected genes across both training and test datasets is ensured by the design of our experiment, where we intentionally used ``similar'' datasets as training and testing pairs. Specifically, the four selected datasets -- abbreviated as VMBSP, VMBSPS2, VMBSA, and VMBSAS2 -- all come from the V1 mouse brain sagittal area. This similarity ensures that the overall gene expression patterns are also highly similar across the four datasets, including those of the highly variable genes.
In fact, a total of $74\%$ out of the top 100 HVGs that were selected appear across all four datasets, with the proportion increasing further when only considering pairwise overlaps, as shown in \Cref{tab:pairwise-overlap}:

\begin{table}[H]
    \caption{Pairwise overlap ratio of shared genes across datasets}
    \centering
    \begin{adjustbox}{max width=0.9\linewidth}
    \begin{threeparttable}
    \begin{tabular}{ c c c c c } 
        \toprule[1pt]
        \textbf{Dataset} & \texttt{VMBSP} & \texttt{VMBSPS2} & \texttt{VMBSA} & \texttt{VMBSAS2} \\
        \midrule
        \texttt{VMBSP}   & \color{gray} $100\%$ & $96\%$ & $77\%$ & $79\%$ \\
        \texttt{VMBSPS2} & \color{gray} $96\%$ & \color{gray} $100\%$ & $78\%$ & $81\%$ \\
        \texttt{VMBSA}   & \color{gray} $77\%$ & \color{gray} $78\%$ & \color{gray} $100\%$ &  $92\%$ \\
        \texttt{VMBSAS2} & \color{gray} $79\%$ & \color{gray} $81\%$ & \color{gray} $92\%$ & \color{gray} $100\%$ \\
        \bottomrule[1pt]
    \end{tabular}
    \end{threeparttable}
    \end{adjustbox}
    \label{tab:pairwise-overlap}
\end{table}
Finally, the top 100 HVGs that are shared across the four datasets are selected, and we aligned them in the same order to form our dataset vectors.
Therefore, this procedure ensures HVGs are adequately expressed in our experiments. 
}

{
\color{black}
\section{Spatial Reconstruction Error}

This section describes the detailed procedure for computing the spatial reconstruction error defined in \Cref{sec:exp-setup}.

Recall that the spatial reconstruction error is defined as solving the following optimization problem:
\begin{align*}
    \min_{\textbf{A}, b, c} \quad & \| c \cdot \textbf{A} \textbf{D}_1 + b - \textbf{D}_2 \|_{F}.
\end{align*}
Here $\textbf{A} \in \mathbb{R}^{m, m}$ is an orthogonal matrix, $b \in \mathbb{R}^m$, and $c$ is a scalar.
The objective value is equivalent ot the Procrustes distance between datasets $\mathbf{D}_1, \mathbf{D}_2 \in \mathbb{R}^{n \times m}$.
The optimization problem is solved mainly via singular value decomposition (SVD). Specifically:
\begin{enumerate}
    \item Both the rescaling term $c$ and translation term $b$ are determined through the normalization, where the former is set to the ratio of their Forbenius norm $\| \mathbf{D}_2 \|_F / \| \mathbf{D} \|_f$, and the latter is set to the difference of their centroid $(\mathbf{1}^\top \mathbf{D}_2 - \mathbf{1}^\top \mathbf{D}_1) / n$.
    \item Then, the rotation matrix $\mathbf{A}$ is computed via SVD of the cross-covariance matrix of the normalized data $\tilde{\mathbf{D}}_1 := c \cdot \mathbf{D}_1 + b$ and $\mathbf{D}_2$, finding the optimal orthogonal transformation matrix that ``aligns'' the two datasets the best to minimize their Frobenius distance: (i) The cross-covariance matrix is computed as $\Sigma = \tilde{\mathbf{D}}_1^\top \mathbf{D}_2$; (ii) Compute its SVD $\Sigma = U \Lambda V^\top$; (iii) Set $\mathbf{A} = U V^\top$.
\end{enumerate}
In our implementation, we directly use the SciPy Python package \cite{2020SciPy-NMeth}, where $\mathbf{A}$, $b$, and $c$ are all computed implicitly within intermediate procedures.
}

\section{Additional Background of VAE}   

{\noindent \bf CVAE.}
A conditional variational encoder (CVAE) \cite{sohn2015learning, wu2023counterfactual} is an extension of VAE, where an additional conditional input $y \in \mathcal{Y}$ can be specified in the encoder and decoder network
\begin{align*}
    q_\theta : \mathcal{X} \times \mathcal{Y} \to \mathcal{Z}, \\
    p_\phi: \mathcal{Z} \times \mathcal{Y} \to \mathcal{X}.
\end{align*}
Similar to VAE, a typical parameterization of CVAE specifies the encoder and decoder functions as conditional Gaussian distributions, where the mean and variance are parameterized by neural networks.

\noindent\textbf{$\beta$-VAE.}
$\beta$-VAE \cite{higgins2017beta} incorporates a temperature term of the KL loss term in the VAE model:
\[
- \beta \cdot D_\text{KL}(q(z|x) || p_{\theta}(z)) + \EE_{z\sim q(\cdot|x)} \left[ \log p_{\theta}(x|z) \right].
\]
Intuitively, a larger KL weight forces the VAE to confine to the prior distribution more, and a smaller KL weight allows the VAE to focus more on minimizing the reconstruction loss.

\noindent\textbf{scVI (single-cell Variational Inference).}
scVI \cite{lopez2018deep} is a leading deep generative model for analyzing single-cell RNA sequencing (scRNA-seq) data. It employs a CVAE architecture to model gene expression patterns across individual cells. In scVI, the input consists of gene expression counts for each cell, while the model aims to learn a latent representation that captures biological variability. The model accounts for technical factors such as library size and batch effects, which are treated as observed variables. Specifically, scVI uses the normalized gene expression counts as the target variable, with the library size included as a nuisance variable to be inferred. Batch information, when available, is incorporated as conditional information to help disentangle technical from biological variability. This approach allows scVI to perform various downstream tasks, including normalization, batch correction, imputation, and dimensionality reduction, making it a versatile tool for single-cell transcriptomics analysis.

{\bf \noindent Markovian Hierarchical VAE.}
A Markovian Hierarchical Variational Autoencoder (MHVAE) \cite{kingma2016improved, sonderby2016ladder} is a generalization of a VAE that extends to multiple hierarchies over latent variables. Under this formulation, latent variables are interpreted as generated from other higher-level, more abstract patterns. Let \(T > 1\) denote the total number of hierarchical levels; the objective of an MHVAE is similar to VAE, except that there are now \(T\) latent variables whose joint distribution can be decomposed into Markov transitions:
\begin{align}
    p_\theta(x) & = \int p_\theta(x | z_{1:T})d z_{1:T} \\
    & = \int p_\theta(x | z_1) \prod_{t = 2}^T p_\theta(z_t | z_{t+1}) d z_{1:T}.
\end{align}
Deriving the ELBO of the MHVAE can follow a similar procedure to that of VAE:
\begin{align*}
    p_\theta(x)
    & \geq \EE_{q_\phi(z_1 | x)}\left[ \log p_\theta(y | z_1) \right] \\
    & \quad - \EE_{q_\phi(z_{T-1} | x)}\left[ D_{\rm KL}\left( q_\phi\left( z_T | z_{T-1} \right) \Vert p(z_T) \right) \right] \notag \\
    & - \sum_{t = 1}^{T-1} \EE_{q_\phi(z_{t-1}, z_{t+1} | x)} \left[ D_{\rm KL}(q_\phi(z_t | z_{t-1}) \Vert p_\theta (z_t | z_{t+1})) \right].
\end{align*}
with \(z_0 \equiv 0\) for notational convenience.
The first term (reconstruction term) and the second term (prior matching term) can be interpreted similarly. The third term (consistency term) is new. It endeavors to make the distribution at \(z_t \) consistent from both forward and backward processes. All terms are taken with respect to distributions that we can sample from, and Monte Carlo estimation can be obtained.

However, a key caveat is that the expectation over two random variables \( \{z_{t-1}, z_{t+1}\} \) for every timestep, the variance of its Monte Carlo estimate could potentially be higher than a term that is estimated using only one random variable per timestep. As it is computed by summing up \(T-1\) consistency terms, the final estimated value of the ELBO may have a high variance for large \(T\) values. Instead, we can derive an equivalent version that does not pose this issue:
\begin{align}
    p_\theta(y)
    & \geq \EE_{q_\phi(z_1 | x)}\left[ \log p_\theta(x | z_1) \right]
    - D_{\rm KL}\left( q_\phi\left( z_T | x \right) \Vert p(z_T) \right) \notag \\
    & \quad - \sum_{t = 2}^{T-1} \EE_{q_\phi(z_{t} | x)} \left[ D_{\rm KL}(q_\phi(z_t -1 | z_{t}, x) \Vert p_\theta (z_{t-1} | z_{t})) \right].
\end{align}
The modified last term is called {\it denoising matching term} following the variational diffusion model literature \cite{luo2022understanding}.  We learn desired denoising
transition step \( p_\theta(z_{t-1}|z_t)\) as an approximation to tractable, variational denoising transition step \( q_\phi (z_{t-1}|z_t, x) \).

\vfill

\end{document}